\documentclass{article} 
\usepackage{iclr2025_conference,times}

\usepackage{amsmath,amsfonts,bm}

\def\eqref#1{equation~\ref{#1}}

\def\1{\bm{1}}

\DeclareMathAlphabet{\mathsfit}{\encodingdefault}{\sfdefault}{m}{sl}
\SetMathAlphabet{\mathsfit}{bold}{\encodingdefault}{\sfdefault}{bx}{n}

\usepackage{hyperref}
\usepackage{url}
\usepackage{amsmath, amssymb, amsthm}
\usepackage{algorithm}
\usepackage{algorithmic}
\theoremstyle{plain}
\newtheorem*{theorem}{Theorem} 
\usepackage[utf8]{inputenc}
\usepackage{graphicx}
\usepackage{natbib} 
\usepackage{amsmath, amssymb}
\usepackage{booktabs} 
\usepackage{makecell} 
\usepackage{multirow}
\usepackage{caption}
\usepackage{subcaption}
\usepackage{tabularx}
\usepackage{float}
\usepackage{verbatim}
\usepackage{enumitem}

\title{LORE: Jointly Learning The Intrinsic\\ Dimensionality and Relative Similarity \\Structure from Ordinal Data}

\author{Vivek Anand\thanks{Corresponding Author} , Alec Helbling, Mark A. Davenport, Sankaraleengam Alagapan, Christopher John Rozell \\
Georgia Institute of Technology\\
Atlanta, GA, USA \\
\texttt{\{vivekanand\}@gatech.edu} \\
\AND
Gordon J. Berman \\
Emory University \\
Atlanta, GA, USA \\
}
\iclrfinalcopy
\begin{document}

\maketitle

\begin{abstract}
Learning the intrinsic dimensionality of subjective perceptual spaces such as taste, smell, or aesthetics from ordinal data is a challenging problem. We introduce LORE (Low Rank Ordinal Embedding), a scalable framework that jointly learns both the intrinsic dimensionality and an ordinal embedding from noisy triplet comparisons of the form, "Is A more similar to B than C?". Unlike existing methods that require the embedding dimension to be set apriori, LORE regularizes the solution using the nonconvex Schatten-$p$ quasi norm, enabling automatic joint recovery of both the ordinal embedding and its dimensionality. We optimize this joint objective via an iteratively reweighted algorithm and establish convergence guarantees. Extensive experiments on synthetic datasets, simulated perceptual spaces, and real world crowdsourced ordinal judgements show that LORE learns compact, interpretable and highly accurate low dimensional embeddings that recover the latent geometry of subjective percepts. By simultaneously inferring both the intrinsic dimensionality and ordinal embeddings, LORE enables more interpretable and data efficient perceptual modeling in psychophysics and opens new directions for scalable discovery of low dimensional structure from ordinal data in machine learning.
\end{abstract}

\section{Introduction}

Learning subjective percepts (SPs), such as taste, smell, or aesthetic preference, poses unique challenges for machine learning. Traditional approaches rely on absolute queries that presuppose known perceptual axes. For example, a taste study might ask participants to rate stimuli on a 1-5 Likert scale \citep{likert1932technique} for ``sweetness'' or ``bitterness''. Such methods suffer from two critical flaws: (1) inconsistency, as respondents interpret scales differently (e.g., one person's ``moderately sweet'' is another's ``very sweet'') \citep{stewart2005absolute}, and (2) predefined conceptual frameworks that limit discovery by forcing ratings on predefined axes. Consequently, researchers risk missing latent dimensions (e.g., a ``metallic'' undertone in coffee) that participants lack vocabulary to describe.

In contrast, relative queries circumvent these issues by capturing perceptual relationships directly. For example, a triplet comparison like ``Is coffee A more similar to coffee B or coffee C in taste?'' allows participants to express nuanced judgments without relying on language or preset scales. Such relative comparisons are therefore particularly well suited for discovering the latent dimensions that organize subjective perceptual spaces.

Relative Similarity or Ordinal Embedding methods (OE) leverage these relative judgements to learn a multidimensional representation. However, all existing OE approaches require the user to specify the embedding dimension in advance \citep{terada2014local, agarwal2007generalized, tamuzAdaptivelyLearningCrowd2011, jain2016finite, van2012stochastic}, with little guidance to the ``true'' complexity of the perceptual space. In practice, this can lead to unnecessarily high dimensional embeddings, concealing the actual structure. For instance, an OE may perfectly satisfy all triplet constraints in a 10-dimensional space, even if the underlying percept is only 2-dimensional. 

Scientific discovery demands parsimony, a principle formalized as Occam's razor \citep{bishop2006pattern}. For the taste example, a 2D embedding is preferable to a 10D alternative: it is easier to interpret, less computationally intensive, and more useful for downstream analyses. In practical terms, a 10D taste embedding might fragment ``sweetness'' into several axes, complicating flavor design or neurological interpretation. Yet, most OE approaches, despite high triplet accuracy, produce overly complex models that mask the true structure of the latent percept.

\begin{figure}[t]
    \centering
    \includegraphics[width=1.0\textwidth]{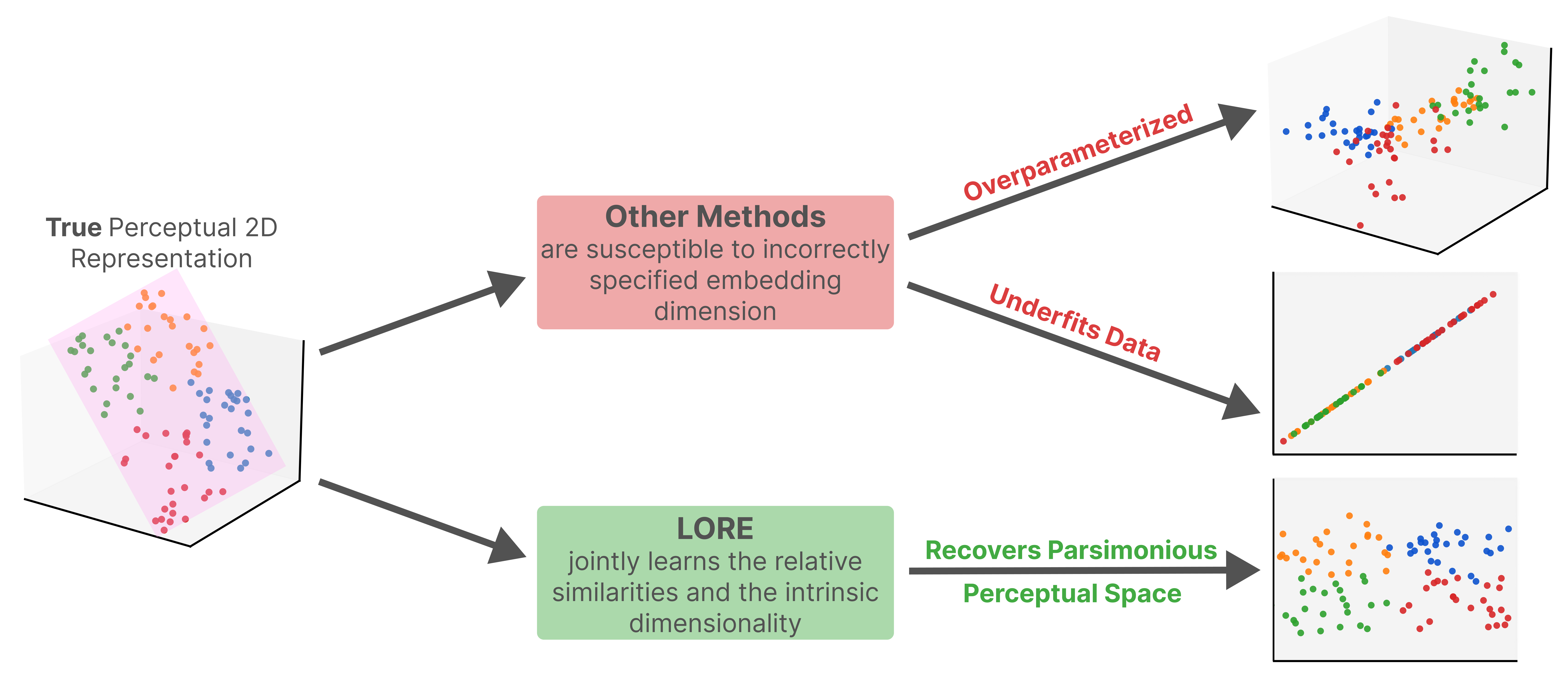}
    \caption{\textbf{LORE jointly learns both the intrinsic dimensionality and relative similarities by balancing dimensionality with similarity constraints}: Other methods are susceptible to an incorrectly chosen embedding dimension.}
    \label{fig:crown_jewel}
    \vspace{-5mm}
\end{figure}

To address this gap, We introduce LORE, a new ordinal embedding algorithm that jointly learns both the embedding and the intrinsic dimensionality, instead of needing to specify the dimension apriori. LORE regularizes using the nonconvex Schatten-$p$ quasi-norm, explicitly balancing triplet accuracy with representation compactness and is optimized via an iteratively reweighted algorithm, with guarantees of convergence to stationary points. Our main contributions are: 
\begin{enumerate}
    \item \textbf{LORE, a novel ordinal embedding algorithm that recovers latent representations that match the intrinsic dimensionality of human perceptual similarity data.} LORE jointly infers both the embedding and its dimensionality by regularizing with the nonconvex Schatten-$p$ quasi-norm. By balancing triplet accuracy and rank regularization we can infer a compact yet accurate representation avoiding underfitting or overparameterization. We optimize the resulting objective using an iteratively reweighted schattent quasi norm algorithm, and provide convergence guarantees of the OE to stationary points.
    \item \textbf{LORE reliably uncovers the intrinsic dimensionality of data through an extensive evaluation where the dimensionality is known apriori.} We first extensively test our algorithm on data with various dimensionality, noise levels and number of queries and demonstrate it outperforms existing methods by far in estimating intrinsic dimensionality with close to optimal performance in triplet accuracy. Secondly, we conduct a simulated perceptual experiment to model taste using an LLM as the ground truth perceptual space we try to model. See Figure~\ref{fig:crown_jewel} for a high level summary of our results.
    \item \textbf{LORE outperforms numerous state of the art methods on the large crowd sourced datasets and learns semantically interpretable axes.} We find that LORE achieves a lower rank representation compared to all baselines while achieving comparable triplet accuracy on three separate crowdsourced datasets \citep{wilber2014cost, kleindessner2017lens, ellis2002quest} and learns axes which are semantically interpretable. 
\end{enumerate}

We anticipate LORE will be a valuable tool for mapping subtle subjective phenomena to interpretable low-dimensional spaces across psychology, neuroscience, and social science. By removing the need to hand tune embedding dimension, LORE jointly learns both the relative similarities and recover true intrinsic dimensionality enabling data driven discovery of subjective percepts.

\section{Related Work}
\label{section:related_work}

\textbf{Ordinal Embeddings as Tools for Psychophysical Scaling:} Psychophysics aims to discover the quantitative mappings that humans use to connect external stimuli to inner perceptual experiences. Psychological percepts are usually studied via \textit{relative judgements} as they are less prone to individual biases, scale interpretation and memory limitations than absolute judgements as humans do not perceive stimuli in isolation \citep{stewart2005absolute}. Given the constraints of data collection, a core challenge in psychophysics is reconstructing perceptual spaces from a few human similarity judgments. OEs address this challenge; they are both query efficient and capable of reconstructing multidimensional perceptual spaces. For example, \citet{filip2024perceptual} derived a tactile-visual embedding for wood textures, identifying roughness and gloss as perceptually orthogonal dimensions. \citet{huber2024tracing} used one such OE to map philosophical concepts onto conceptual axes from human similarity data and \citet{sauer2024objective} used it to map perceived distortions of vision from spectacles. Moreover, active learning approaches like \citet{canal2020active} have demonstrated how query efficiency for data collection can be further improved.

\textbf{Metric Learning/Contrastive Learning are distinct from OEs:} Learning from relative comparisons has been used in metric learning \citep{suarez2018tutorial} and contrastive learning \citep{chen2020simple}. Metric learning aims to learn a metric space from the data while contrastive learning separates similar and dissimilar datapoints. Metric Learning typically combines relative judgements and explicit representations (say images). The goal is to learn a distance metric from both sources of information. This additional representation, absent in OEs, changes the optimization problem and prevents direct transfer of metric learning approaches. Contrastive learning seeks to group similar datapoints together and push dissimilar ones apart with the presence of explicit additional supervised information which do not exist for OEs. Therefore, while metric learning and contrastive learning methods are similar in learning from relative information, they cannot be directly applied to OEs.

\textbf{Intrinsic Dimensionality recovery is critical for psychophysics:} A core goal in psychophysics is to recover the latent internal representations that individuals use to perceive psychophysical stimuli. Each representation is composed of two important characteristics: how well the representation recovers the ordinal relationships between the percepts and the \textit{intrinsic rank or dimensionality} of the representation obtained. While OEs are able to maintain ordinal consistency \citep{vankadara2023insights}, they are unable to identify the intrinsic dimensionality as we show in this paper. This is a key limitation of OEs that reduces their utility for psychophysical analysis. \citet{kunstle2022estimating} addressed this problem by modelling it as a multiple hypothesis test with separate embeddings trained for each candidate dimension and triplet accuracies used to estimate the true intrinsic rank. This approach, however, has two main limitations: 
    \begin{enumerate}
        \item \textbf{Hypothesis dependence}: It requires predefining plausible dimensionalities, risking model misspecification and reducing statistical power if the true dimensionality exceeds or is less than the hypothesized bounds.
        \item \textbf{Lack of Scalability}: Training multiple embeddings for each hypothesized rank is computationally expensive and quickly becomes prohibitive for a greater number of percepts. This is especially problematic for active querying where efficiency is critically important.
    \end{enumerate}
Building on these limitations, we propose a method to \textbf{jointly infer both dimensionality and multidimensional representations} via a novel OE method, eliminating the need for explicit hypothesis enumeration. For psychophysics, this enables recovery of perceptual geometry without prior assumptions on dimensionality. For machine learning, it offers a scalable approach to uncovering low dimensional structure directly from ordinal data.

\section{Background on Ordinal Embeddings}
\label{section:background:ordinal_embeddings}
The ordinal embedding problem seeks to learn an embedding matrix $\mathbf{Z} \in \mathbb{R}^{N \times d'}$ from triplet judgements from the true perceptual space lying in an unknown $\mathbf{P} \in \mathbb{R}^{N \times d}$ where $d \ll N$ is the \textit{intrinsic dimensionality} or the \textit{intrinsic rank} of the perceptual space. OE problems usually assume integral dimensionalities/ranks due to low number of percepts and we do the same. $\mathbf{Z}$ is learned indirectly via noisy \textit{similarity triplet} comparisons where the anchor percept $a$ is more similar or closer in the perceptual space to percept $i$ than percept $j$ into an embedding space of dimension $d'$. Specifically, this is denoted by $(a, i, j) = t \in T$ where $d(\mathbf{P}_{a,:}, \mathbf{P}_{i,:}) < d(\mathbf{P}_{a,:}, \mathbf{P}_{j,:})$ where $\mathbf{P}_{a,:}$, $\mathbf{P}_{i,:}$, $\mathbf{P}_{j,:}$ are the rows indexed by percepts $a,i,j$ respectively in $\mathbf{P}$ and $d(\cdot,\cdot)$ is the Euclidean distance between the unknown percepts. A central challenge is that intrinsic rank is unknown and the embedding dimension is set heuristically.

Though this framework is relatively simple, solving OEs efficiently can be challenging as the OE problem is NP-Hard \citep{bower2018landscape}, most loss functions are nonconvex and efficient learning demands at least $\mathcal{O} (N d \log N)$ actively sampled triplets \citep{jain2016finite}. As a result, the choice of optimization framework is crucial to obtaining a good OE and depending on dataset characteristics, different OE methods may be preferred for different situations \citep{vankadara2023insights}.

Gram matrix approaches optimize a positive semidefinite matrix $\mathbf{G} = \mathbf{Z}\mathbf{Z}^T \in \mathbb{R}^{N \times N}$ that capture the pairwise differences. While theoretically appealing because they are agnostic to the embedding dimension during optimization, they require enforcing PSD constraints that are not scalable for large $N$. Early methods like Generalized Non Metric Multi Dimensional Scaling (GNMDS) \citep{agarwal2007generalized} and probabilistic models like Crowd Kernel Learning (CKL) suffer from limited accuracy or poor scalability. Fast Ordinal Triplet Embedding (FORTE) accelerates this with a kernelized nonconvex triplet loss optimized by efficient Projected Gradient Descent (PGD) and line search.

Direct embedding approaches optimize $\mathbf{Z}$ which leads to faster gradient updates that scale with the smaller $\mathcal{O}(N d')$ versus $\mathcal{O} (N^2)$ with Gram matrix approaches. Examples include t-distributed Stochastic Triplet Embedding (t-STE) \citep{van2012stochastic} and Soft Ordinal Embedding (SOE) \citep{terada2014local} with the latter widely used for its efficiency and high accuracy. A deep learning variant, Ordinal Embedding Neural Network (OENN) \citep{vankadara2023insights} underperforms likely due to the limited supervisory signal in purely ordinal data.

However, a shared fundamental limitation of all existing methods is the inability to recover the intrinsic rank $d$, which risks overparameterizing the true perceptual latent space.

\begin{table}[t]
    \centering
    \caption{Characterization of Different Ordinal Embedding Algorithms}
    \label{tab:oe_comparison}
    \resizebox{\columnwidth}{!}{
    \begin{tabular}{l c c c c c}
        \toprule
        \textbf{Method} & \textbf{\makecell{Optimizes\\Over}} & \textbf{\makecell{Recovers\\Intrinsic Rank}} & \textbf{Scalable} & \textbf{\makecell{High Triplet\\Accuracy}} & \textbf{\makecell{Semantically\\Interpretable}} \\
        \midrule
        GNMDS & Gram Matrix & $\times$ & $\times$ & $\times$ & $\times$ \\
        CKL & Gram Matrix & $\times$ & $\times$ & $\checkmark$ & $\checkmark$ \\
        FORTE & Gram Matrix & $\times$ & $\checkmark$ & $\checkmark$ & $\times$ \\
        t-STE & Embedding & $\times$ & $-$ & $\checkmark$ & $\times$ \\
        SOE & Embedding & $\times$ & $\checkmark$ & $\checkmark$ & $\times$ \\
        OENN & Embedding & $\times$ & $\checkmark$ & $\times$ & $\times$ \\
        \textbf{LORE} (ours) & Embedding & $\checkmark$ & $\checkmark$ & $\checkmark$ & $\checkmark$ \\
        \bottomrule
    \end{tabular}%
    }
    \vspace{-2mm}
\end{table}

\section{Methods}
\label{section:methods}

We introduce a scalable ordinal embedding (OE) framework that jointly learns both the embedding and the intrinsic rank of the perceptual space. To ensure computational efficiency on large datasets (large $T$ and $N$) we directly optimize the embedding $\mathbf{Z}$ instead of the Gram matrix $\mathbf{G}$.
The key insight is that we want the learning algorithm to adaptively select the embedding dimensionality as needed to fit the percepts well but not use any more extra space than necessary. Therefore, a natural approach is to penalize the rank of the learned embedding via regularization.

As SOE has the best properties of all the OEs that optimize over $\mathbf{Z}$, we extend it with regularization. As the rank constraint is NP-Hard and non-convex \citep{fazel2001rank} a common approach is to regularize with the nuclear norm instead where $\|\mathbf{Z}\|_* = \sum_{i=1}^{\min \{N, d'\}} \sigma_{i} (\mathbf{Z})$, where $\sigma_i (\mathbf{Z})$ is the $i$-th singular value \citep{candesExactMatrixCompletion2008a, fazel2001rank}. 
The objective then becomes:
\begin{equation}
    \min_{\mathbf{Z}} \quad \Psi(\mathbf{Z}) = \sum_{(a,i,j) \in T} \max\{0, 1 + d(\mathbf{Z}_{a,:}, \mathbf{Z}_{i,:}) - d(\mathbf{Z}_{a,:}, \mathbf{Z}_{j,:})\} + \lambda \|\mathbf{Z}\|_*.
\end{equation}

Though the nuclear norm is convex and relatively easy to optimize, it uniformly shrinks all of singular values \citep{negahban2011estimation, zhang2010nearly}. Recent theoretical and empirical evidence indicates that the nonconvex Schatten-$p$ quasi norm $\|\mathbf{Z}\|_p^p = \sum_{i=1}^{\min \{N, d\}} \sigma_{i} (\mathbf{Z})^p = \sum_{i=1}^{\min \{N, d\}} g[\sigma_{i} (\mathbf{Z})]$ for $0 < p < 1$, recovers the intrinsic rank for low rank recovery problems better than the nuclear norm can \citep{luGeneralizedNonconvexNonsmooth2014, marjanovic2012l_q}. The Schatten-$p$ quasi norm generalizes the nuclear norm by penalizing larger singular values less severely which is shown to aid in intrinsic rank recovery. We leverage this property and for the first time, to our knowledge, integrate the Schatten quasi-norm into a scalable ordinal embedding framework, allowing implicit perceptual rank discovery as seen below in:
\begin{equation}
    \min_{\mathbf{Z}} \quad \Psi(\mathbf{Z}) = \sum_{(a,i,j) \in T} \max\{0, 1 + d(\mathbf{Z}_{a,:}, \mathbf{Z}_{i,:}) - d(\mathbf{Z}_{a,:}, \mathbf{Z}_{j,:})\} + \lambda \|\mathbf{Z}\|_p^p.
\end{equation}

Though incorporating the Schatten Quasi-Norm improves rank recovery properties, it also introduces additional nonconvexity into the regularizer that makes optimization more challenging. To overcome the inherent non-differentiability of the ordinal loss and the complexity of nonconvex regularization, we smooth the hinge triplet loss with the softplus function \citep{Dugas2001softplus}. This transformation makes the objective differentiable except where the embedding collapses ($\mathbf{Z}_{a,:} = \mathbf{Z}_{i,:}$ or $\mathbf{Z}_{a,:} = \mathbf{Z}_{j,:}$). However, collapses can be avoided with wide initializations of $\mathbf{Z}$.
This smoothing enables provable convergence and is empirically essential, as it mitigates zero gradient plateaus to facilitate training on large datasets. Then the objective function is defined as:
\begin{equation}
    \min_{\mathbf{Z}} \quad \Psi(\mathbf{Z}) = \sum_{(a,i,j) \in T} \log(1 + \exp(1 + d(\mathbf{Z}_{a,:}, \mathbf{Z}_{i,:}) - d(\mathbf{Z}_{a,:}, \mathbf{Z}_{j,:}))) + \lambda \sum_{i=1}^{\min\{N, d'\}} \sigma_{i} (\mathbf{Z})^p .
\end{equation}

Despite smoothing the ordinal loss, our objective remains highly nonconvex due to the Schatten-$p$ quasi-norm regularization, which makes reliable optimization difficult. Standard gradient methods often get stuck in poor local minima or fail to converge. To overcome this, we use an iteratively reweighted algorithm inspired by \citet{sun2017convergence}. At each step, the algorithm minimizes a weighted surrogate of the original objective, leading to steady improvement even in complex landscapes. As established in Theorem 1, this procedure is guaranteed to converge to a stationary point, ensuring robust and reliable learning.

\begin{theorem}[LORE converges to a stationary point]
\label{thm:convergence_to_stationary_point_paper}
The sequence of OEs generated by the LORE algorithm $\{\mathbf{Z}^k\}_{k=1,2,3,\dots}$ converges. i.e.
\begin{equation}
    \sum_{k=1}^{+ \infty} \|\mathbf{Z}^{k+1} - \mathbf{Z}^k\|_F < + \infty
\end{equation}
\end{theorem}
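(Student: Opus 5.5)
The plan is to cast LORE as an instance of the iteratively reweighted nuclear norm scheme of \citet{sun2017convergence} and run the standard Kurdyka--{\L}ojasiewicz (KL) finite-length argument. Write the objective as $\Psi(\mathbf{Z}) = F(\mathbf{Z}) + \lambda \sum_i g(\sigma_i(\mathbf{Z}))$, where $F$ is the softplus triplet loss and $g(s) = s^p$. Since $g$ is concave and increasing on $[0,\infty)$, at iterate $\mathbf{Z}^k$ we can linearize it with weights $w_i^k = p(\sigma_i(\mathbf{Z}^k)+\epsilon)^{p-1}$. Each step then minimizes the surrogate
\[
F(\mathbf{Z}^k) + \langle \nabla F(\mathbf{Z}^k), \mathbf{Z}-\mathbf{Z}^k\rangle + \tfrac{L}{2}\|\mathbf{Z}-\mathbf{Z}^k\|_F^2 + \lambda \sum_i w_i^k \sigma_i(\mathbf{Z}).
\]
Because the weights are nondecreasing in $i$, this is a weighted singular value thresholding step with a closed form. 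I will take $\epsilon>0$, or equivalently smooth $g$ to $(s+\epsilon)^p$, so that the weights are bounded.

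\textbf{Step 1: sufficient decrease.} Assume $F$ is $L$-smooth on the region visited. Distances $d(\cdot,\cdot)$ are smooth away from collapse, and I assume the paper's standing assumption that iterates stay in a compact set bounded away from collapse; using squared distances instead would give global smoothness. Combine the descent lemma for $F$ with concavity of $g$, namely $g(\sigma_i(\mathbf{Z}^{k+1})) \le g(\sigma_i(\mathbf{Z}^k)) + w_i^k(\sigma_i(\mathbf{Z}^{k+1})-\sigma_i(\mathbf{Z}^k))$, and with the optimality of $\mathbf{Z}^{k+1}$ for the surrogate. Using a step size $1/\mu$ with $\mu > L$, this gives
\[
\Psi(\mathbf{Z}^{k+1}) \le \Psi(\mathbf{Z}^k) - \tfrac{\mu-L}{2}\|\mathbf{Z}^{k+1}-\mathbf{Z}^k\|_F^2 .
\]
Since $\Psi \ge 0$, the values $\Psi(\mathbf{Z}^k)$ converge and $\sum_k \|\mathbf{Z}^{k+1}-\mathbf{Z}^k\|_F^2<\infty$. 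The softplus loss is not coercive in general, but the regularizer is, so $\{\mathbf{Z}^k\}$ stays in a bounded sublevel set.

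\textbf{Step 2: relative error.} From the optimality condition of the subproblem, I will construct an element of the limiting subdifferential $\partial\Psi(\mathbf{Z}^{k+1})$ whose norm is at most $C\|\mathbf{Z}^{k+1}-\mathbf{Z}^k\|_F$. The bound needs two ingredients. The first is $\|\nabla F(\mathbf{Z}^{k+1})-\nabla F(\mathbf{Z}^k)\|_F \le L\|\mathbf{Z}^{k+1}-\mathbf{Z}^k\|_F$. The second is a Lipschitz bound on the weights: $|w_i^{k+1}-w_i^k| \le C'|\sigma_i(\mathbf{Z}^{k+1})-\sigma_i(\mathbf{Z}^k)| \le C'\|\mathbf{Z}^{k+1}-\mathbf{Z}^k\|_F$, where the last inequality is Weyl/Mirsky. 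This is exactly why $\epsilon>0$ is needed. With the smoothed $g$, the subdifferential of the spectral function is $\mathbf{U}\,\mathrm{diag}(g'(\sigma))\,\mathbf{V}^T$ plus the nuclear-type part on the zero singular values. I expect this to be the main obstacle: matching the subproblem's weighted subgradient to a genuine subgradient of $\Psi$ at the new point, when singular vectors may be non-unique and singular values may be zero. I would handle it with Lewis's characterization of subdifferentials of orthogonally invariant functions, following \citet{sun2017convergence}.

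\textbf{Step 3: KL property and finite length.} $\Psi$ is built from $\log$, $\exp$, Euclidean norms, and singular values raised to the power $p$, all of which are definable in the o-minimal structure $\mathbb{R}_{\exp}$. Hence $\Psi$ satisfies the KL inequality at every point. Boundedness gives a nonempty compact set of cluster points, on which $\Psi$ is constant by Step 1 together with continuity. The uniformized KL lemma of Attouch--Bolte--Svaiter, combined with Steps 1 and 2, then yields
\[
2\|\mathbf{Z}^{k+1}-\mathbf{Z}^k\|_F \le \|\mathbf{Z}^k-\mathbf{Z}^{k-1}\|_F + c\,[\varphi(\Psi(\mathbf{Z}^k)-\Psi^*)-\varphi(\Psi(\mathbf{Z}^{k+1})-\Psi^*)].
\]
Summing telescopes to $\sum_k\|\mathbf{Z}^{k+1}-\mathbf{Z}^k\|_F<\infty$. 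So the sequence is Cauchy and converges, and Step 2 shows that its limit is a stationary point.
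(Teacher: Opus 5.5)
Your route is the paper's route. The paper's proof checks four hypotheses and then invokes Theorem 1 of \citet{sun2017convergence} as a black box: Lipschitz gradient of $f$ away from collapse, $g$ concave and nondecreasing, coercivity from the regularizer, and the KL property by definability. Your Steps 1--3 unpack that theorem under the same implicit no-collapse hypothesis, and they are sound for the iteration you actually analyze.

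The genuine gap is the $\epsilon$. Algorithm 1 (line 10) thresholds by a constant multiple $c\,\sigma_i(\mathbf{Z}^k)^{p-1}$, with $c=\lambda p/\mu$ for $\Psi$, so $\epsilon=0$. A zero singular value therefore gets an infinite threshold and stays zero forever; with $\epsilon>0$ it can reappear. Running the scheme on $g_\epsilon(s)=(s+\epsilon)^p$ thus produces a different sequence, and its limit is stationary for $F+\lambda\sum_i(\sigma_i+\epsilon)^p$, not for $\Psi$. As written, you prove the theorem for a smoothed variant of LORE, not for LORE. The paper has the same hole from the other side: its check of A2 asserts that $x^p$ is Lipschitz, which fails at $0$. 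So you were right that something has to be done.

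You can close the gap without $\epsilon$, as follows.
\begin{itemize}
\item \textbf{Step 1 still holds.} For $\sigma_i(\mathbf{Z}^k)>0$, concavity of $s^p$ gives your inequality. For $\sigma_i(\mathbf{Z}^k)=0$, both sides vanish because $\sigma_i(\mathbf{Z}^{k+1})=0$. The subproblem becomes a weighted SVT under the constraint $\mathrm{rank}\le\mathrm{rank}(\mathbf{Z}^k)$; it is still solved in closed form, and $\mathbf{Z}^k$ is feasible for it.
\item \textbf{Uniform bound.} The iterates stay in a compact sublevel set, and $\nabla F$ is bounded (softplus$'\in(0,1)$ and distance gradients are unit vectors). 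Hence $\|\mathbf{Z}^k-\mu^{-1}\nabla F(\mathbf{Z}^k)\|_2\le B$ for all $k$.
\item \textbf{Rank stabilizes.} Put $\delta=(c/B)^{1/(1-p)}$. If $0<\sigma_i(\mathbf{Z}^k)<\delta$, its threshold exceeds $c\,\delta^{p-1}=B$, so $\sigma_i(\mathbf{Z}^{k+1})=0$. Since zeros stay zero, the rank is nonincreasing and drops strictly whenever some nonzero singular value lies below $\delta$. So there is a $K$ after which the rank is a constant $r$ and every nonzero singular value of $\mathbf{Z}^k$ is at least $\delta$.
\item \textbf{Step 2 weight bound, $i\le r$.} On $[\delta,\infty)$ the map $s\mapsto ps^{p-1}$ is Lipschitz with constant $p(1-p)\delta^{p-2}$, which gives your bound.
\item \textbf{Step 2, $i>r$.} No bound is needed, because the limiting subdifferential of $|x|^p$ at $0$ is all of $\mathbb{R}$. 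Take $\mathbf{U},\mathbf{V}$ from the SVD of $\mathbf{Z}^k-\mu^{-1}\nabla F(\mathbf{Z}^k)$; this is also an SVD of $\mathbf{Z}^{k+1}$. By Lewis's formula, $\mathbf{U}\,\mathrm{diag}(y)\,\mathbf{V}^T$ is a subgradient for any choice of the off-support entries. So you can match the subproblem's optimality condition exactly there, which also removes the obstacle you flagged in Step 2.
\end{itemize}
Steps 2 and 3 then run from iteration $K$ onward. This gives finite length for the actual LORE sequence and a limit that is stationary for $\Psi$ itself.
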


\textit{Proof Sketch:} We use the general framework for nonconvex Schatten Quasi-Norm optimization as seen in \citet{sun2017convergence} but crucially, check the specific conditions for the LORE objective. The full proof is in Appendix~\ref{section:appendix:lore_proof}.

Our convergence guarantee is significant because, for ordinal embedding problems, stationary points are widely believed to be nearly as good as global optima in objective value. This is supported empirically \citep{vankadara2023insights} and theoretically. \citet{bower2018landscape} proved that for certain OE settings with $d=2$, all local optima are global. Moreover, when sufficient triplet data is available, sub-optimal local minima are rarely observed. Building on these insights, we expect that our method will also recover high quality embeddings in realistic settings. Our experimental results confirm that LORE learns high accuracy ordinal embeddings, even with the inherent nonconvexity of the objective.

We implement the optimization using an efficient iteratively reweighted algorithm, seen in Algorithm 1, that updates the embedding and regularization at each step. In the typical regime where the embedding dimension is much smaller than the number of items and triplets, each iteration requires $\mathcal{O}(d'(T + N d'))$ operations, making LORE scalable to large datasets. Additional implementation specifics are in Appendix~\ref{section:appendix:lore_implementation}.
\begin{algorithm}[t]
\caption{Learning LORE}
\label{alg:learning_lore}
\begin{algorithmic}[1]
\STATE \textbf{Input:} $\mathbf{Z}^0 \in \mathbb{R}^{N \times d'}$, $T$, $\lambda$
\STATE Assign $\text{prev\_objs} \leftarrow [\infty]$
\FOR{$k=0, 1, 2, \dots$}
    \STATE $\sigma \leftarrow \text{Singular Values}(\mathbf{Z}^k)$
    \STATE $\text{curr\_obj} \leftarrow \sum_{(a,i,j) \in T} \log(1 + \exp(1 + d(\mathbf{z}_a, \mathbf{z}_i) - d(\mathbf{z}_a, \mathbf{z}_j))) + \lambda \sum_{i=1}^{\min\{N, d'\}} \sigma_{i} (\mathbf{Z})^p$
    
    \IF{$|\text{curr\_obj} - \text{prev\_objs}[-1]| < \text{tol}$}
        \STATE \textbf{break} \hfill \COMMENT {Convergence check}
    \ENDIF
    
    \STATE $\mathbf{U}, \mathbf{S}, \mathbf{V}^T \leftarrow \text{SVD}(\mathbf{Z}^k - \frac{1}{\mu} \nabla_{\mathbf{Z}^k} f(\mathbf{Z}^k))$
    \STATE $\mathbf{S}^k \leftarrow \mathbf{S} - \frac{p}{\mu} \sigma^{p-1}$
    \STATE $\mathbf{S}^k \leftarrow \text{sorted}(\mathbf{S}^k [\mathbf{S}^k > 0], \text{descending})$
    \STATE $\mathbf{Z}^{k+1} \leftarrow \mathbf{U} \mathbf{S}^k \mathbf{V}^T$
    \STATE $\text{prev\_objs}[k] \leftarrow \text{curr\_obj}$
    
    \IF{$\|\mathbf{Z}^{k+1} - \mathbf{Z}^0\|_\infty < \text{tol}$}
        \STATE \textbf{break} \hfill \COMMENT {Check if close to stationary point}
    \ENDIF
\ENDFOR
\RETURN $\mathbf{Z}^{k+1}$
\end{algorithmic}

\end{algorithm}

In summary, our methodological contributions are: (1) formulating a new ordinal embedding approach that jointly learns the ordinal embedding and intrinsic rank using Schatten quasi-norm regularization; (2) establishing an efficient optimization strategy based on iteratively reweighted minimization tailored for this nonconvex objective, along with convergence guarantees and (3) providing a scalable algorithm suitable for large scale perceptual similarity data.

\vspace{-1mm}
\section{Results}
\label{section:results}

We present five pieces of empirical evidence in support of our method's claims. We outline our experimental setup, including baseline methods and the generative process for producing ordinal embedding (OE) tasks. Next, we show how to select regularization levels for LORE. We then benchmark LORE against standard baselines across key metrics, followed by a comparison on proxy large language model (LLM) generated perceptual spaces. Finally, we assess performance on real, crowdsourced triplet data involving human judgments and see that LORE's learned axes have semantic meaning. Collectively, these results demonstrate that LORE is the only OE method that can effectively jointly learn high quality ordinal embeddings and the intrinsic rank.

\subsection{Setup}
\label{section:results:setup}

We benchmark LORE primarily against (1) SOE and (2) FORTE. These methods represent the best performing direct and Gram matrix OE approaches, respectively, as established by prior work \citep{vankadara2023insights}. For our last two experiments, which are computationally less demanding, we also compare against additional OE methods like t-STE, CKL and the dimensionality estimation method using hypothesis testing from \citep{kunstle2022estimating} denoted as Dim-CV Our core evaluation criteria are:
\begin{itemize}
    \item \textbf{Test Triplet Accuracy}: The proportion of held-out triplets correctly satisfied by the learned embedding and the primary metric in the OE literature.
    \item \textbf{Measured Rank}: The effective rank of the learned embedding, as a measure of intrinsic dimensionality recovery.
\end{itemize}

An ideal ordinal embedding should achieve high test triplet accuracy while maintaining a measured rank close to the true intrinsic rank of the underlying perceptual space.

For LORE, we fix $p=0.5$ as it offers a good balance between rank recovery and optimization stability as seen in prior work \citep{luGeneralizedNonconvexNonsmooth2014, wang2024efficient, sun2017convergence}. Our Results in Appendix~\ref{section:appendix:lore_effect_on_p} concur with the prior work. As $\mu$ only needs to be greater than the Lipschitz constant of triplet loss term, we set it to $0.1$ which is greater than what we find empirically. Both of these are set apriori and do not require tuning. The only hyperparameter that is tuned is the regularization parameter $\lambda$. However, as we show in Section~\ref{section:results:synthetic_experiments}, there exists a wide range of $\lambda$ around $0.01$ that yields both high triplet accuracy and intrinsic rank recovery across a variety of dataset conditions. Thus, we expect that in practice, a user can set $\lambda$ to $0.01$ without extensive hyperparameter tuning which our later experiments confirm works well across real world datasets. Initializations are usually critical for nonconvex optimization problems. However, due to the guaranteed convergence of LORE to stationary points, and the fact that Ordinal Embedding algorithms tend to have good local minima we find that random Gaussian initializations with variance of at least 5 work well across all experiments without any special tuning. All of our experiments use this initialization scheme and we suggest it as a default choice for practitioners.

We systematically vary four factors: fraction of queries, intrinsic rank, number of percepts, and noise level in the generative model. For synthetic experiments, we generate perceptual spaces of specified size and rank, sample noisy triplets to mimic human responses, and fit each method before evaluating on test triplets. Our synthetic data model generates a random perceptual space of specified rank and number of percepts, followed by sampling triplets with replacement to simulate human queries. We then use a standard approach \citep{canal2020active, vankadara2023insights} to model response uncertainty by sampling Gaussian noise independently and adding to each triplet distance. The resulting triplet data is then used to fit all OE algorithms, which are evaluated on held-out test triplets for both accuracy and measured rank. Unless otherwise stated, all experiments use $\text{query\_fraction}=0.1$, $p=0.5$, $d=5$, $N=50$, $\text{noise}=0.1$, and $d'=15$ for 30 independent seeds. We include code to reproduce this generative process in the supplemental material.

\vspace{-1mm}

\subsection{LORE has a consistent and stable regularization setting that yields high triplet accuracy and intrinsic rank recovery}
\label{section:results:synthetic_experiments}

\begin{figure}[t]
    \centering
    \includegraphics[width=0.90\textwidth]{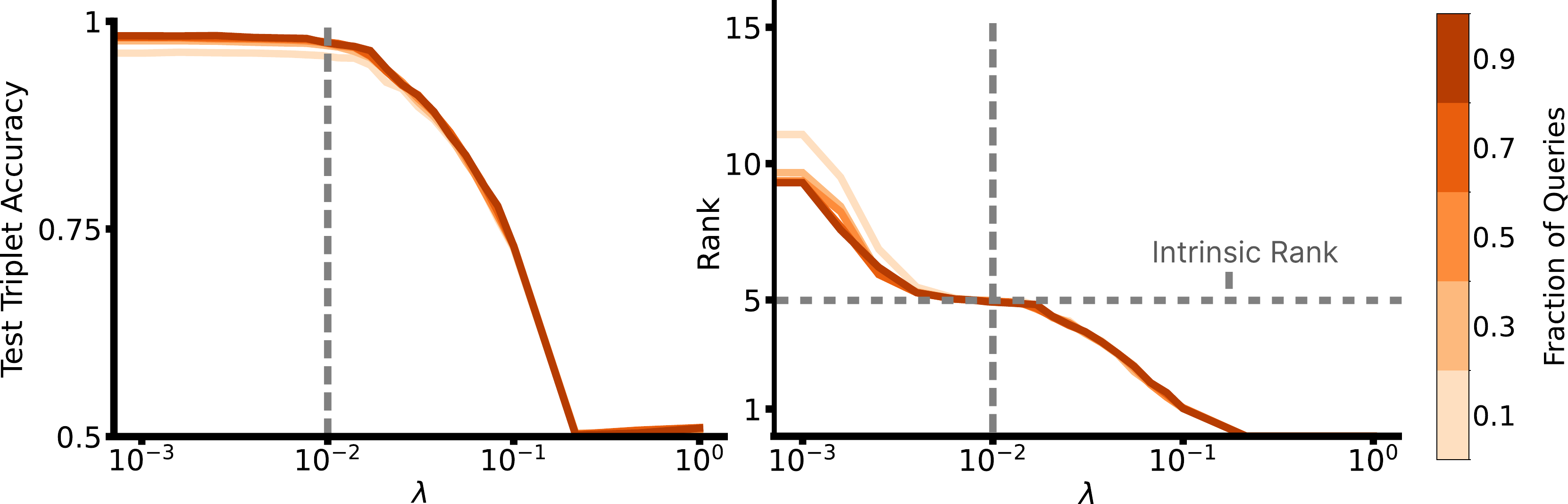} 
    \caption{\textbf{LORE has high test triplet accuracy and intrinsic rank recovery across varying number of queries}. (Left) Mean test triplet accuracy vs $\lambda$ for LORE as Fraction of Queries varies. (Right) Mean measured rank vs $\lambda$ for LORE as Fraction of Queries varies.}
    \label{fig:synthetic_experiments_lore_num_queries}
    \vspace{-5mm}
\end{figure}

Our first set of experiments explores whether LORE admits a regularization regime that yields both high test triplet accuracy and reliable intrinsic rank recovery. As shown in Figure~\ref{fig:synthetic_experiments_lore_num_queries}, across a broad range of the regularization parameter ($\lambda \approx 0.01$), LORE achieves nearly perfect test triplet accuracy and accurate intrinsic rank recovery, even as the fraction of queried triplets varies. Further results in Appendix~\ref{section:appendix:lore_additional_plots} show that LORE performs similarly with varying noise, number of percepts and intrinsic rank. These pieces of evidence taken together confirm that high triplet accuracy and intrinsic rank recovery persists with different noise levels, numbers of percepts and intrinsic rank for the same regularization range. Thus, LORE is robust in hyperparameter selection ($\lambda$).

\vspace{-2mm}
\subsection{LORE outperforms baselines in rank recovery; matches in triplet accuracy}
\label{section:results:synthetic_experiments:compare_baselines}

\begin{figure}[t]
    \centering
    \includegraphics[width=1.0\textwidth]{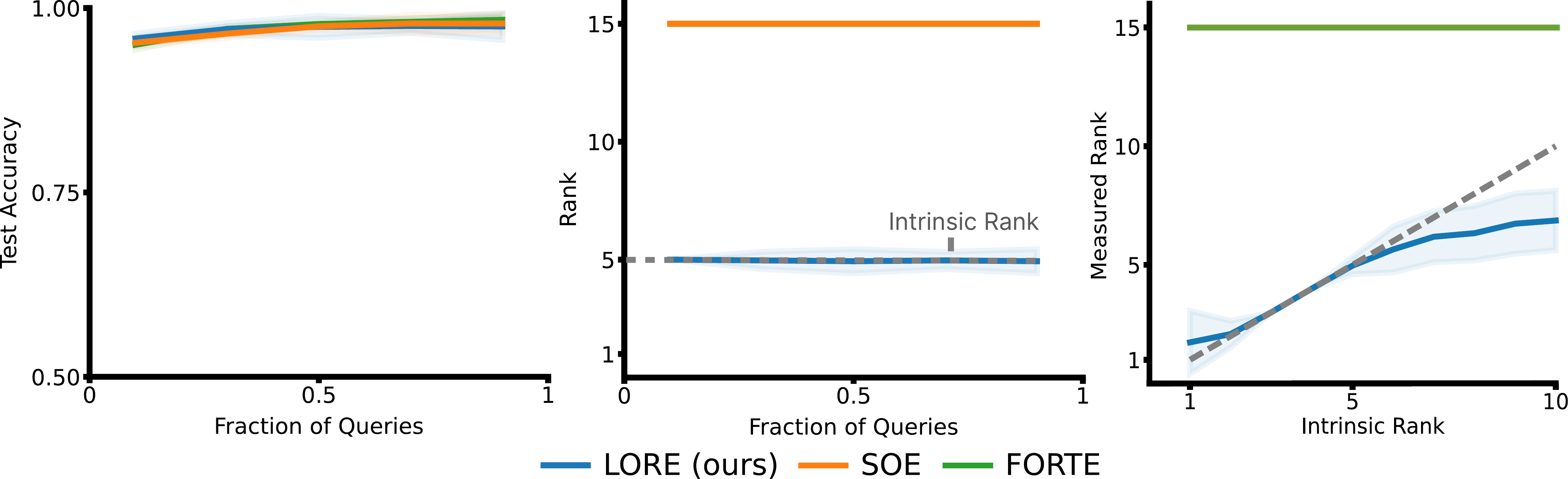}
    \caption{\textbf{Only LORE can recover the intrinsic rank while maintaining comparable test triplet accuracy as number of queries varies}: (Left) Mean test triplet accuracy vs fraction of queries used. (Center) Mean measured rank vs fraction of queries used. (Right) Mean Measured Rank vs Intrinsic Rank. The gray dotted line indicates the ideal case where the measured rank is equal to the intrinsic rank. Shaded Areas indicate $\pm 2$ Standard Deviations.}
    \label{fig:lore_vs_others_consolidated}
    \vspace{-3mm}
\end{figure}

Figure~\ref{fig:lore_vs_others_consolidated} (left, center) demonstrates that LORE uniquely recovers the true intrinsic rank of the embedding across all tested query fractions, while baseline methods consistently default to the maximum allowed dimension. Importantly, LORE matches the test triplet accuracy of the best baseline across all conditions, achieving low rank solutions without sacrificing predictive performance. 

Additionally, Figure~\ref{fig:lore_vs_others_consolidated} (right) shows that as the true intrinsic rank increases, only LORE tracks this change, whereas all other methods ignore the underlying complexity and fail to adapt. While some loss in rank recovery is observed at higher true ranks (expected due to fixed number of triplets and the curse of dimensionality \citep{bishop2006pattern}), LORE consistently outperforms competitors in recovering reduced the intrinsic rank. Further results in Appendix~\ref{section:appendix:lore_baselines} confirm that LORE maintains an advantage across different noise and percept counts. Thus, LORE is the only method to reliably recover both accurate ordinal embeddings and the intrinsic rank thereby avoiding underfitting or overparameterizing the true perceptual space. These results highlight the practical value of LORE for applications where discovering latent structure is critical.

\subsection{LORE recovers Intrinsic Rank in a Simulated Perceptual Experiment}
\label{section:results:artificial_human_experiments}

\begin{figure}[t]
    \centering
    \includegraphics[width=1.0\textwidth]{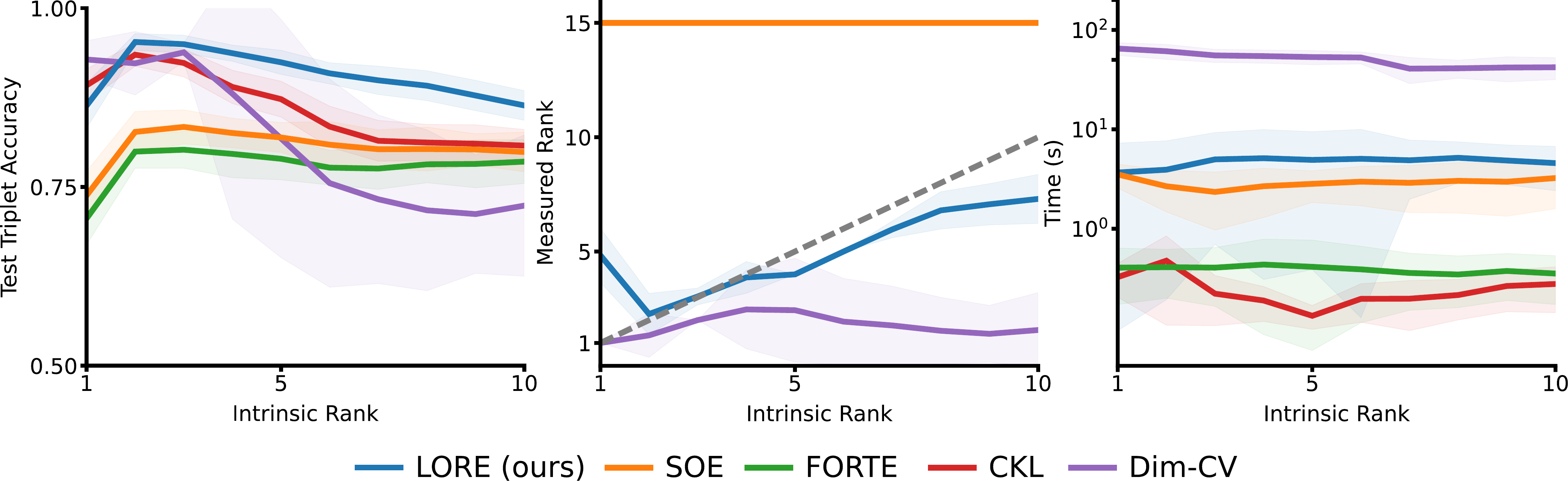}
    \caption{\textbf{LORE outperforms baselines for both test triplet accuracy and intrinsic rank for a simulated LLM perceptual experiment}. (Left) Mean test triplet accuracy vs intrinsic rank. (Center) Mean measured rank vs intrinsic rank. The gray dotted line is the ideal case where the measured rank is equal to the intrinsic rank. (Right) Time taken for processing vs intrinsic rank. Shaded Areas indicate $\pm 2$ Standard Deviations.}
    \label{fig:artificial_perceptual_vary_baselines_full}
    \vspace{-5mm}
\end{figure}

Human perceptual experiments are a key application of LORE, but for real perceptual datasets, the true intrinsic rank is unknown.
Consequently, we leverage recent findings that large language models (LLMs) encode human-aligned perceptual information across domains such as taste, pitch, and timbre \citep{marjieh2024large}.
Therefore we use the LLM embedding space as a realistic proxy of the true perceptual space in humans.
To evaluate how our LORE compares to other baseline algorithms and the Dim-CV, which uses hypothesis testing and training multiple embeddings to estimate intrinsic rank, we evaluate both the test triplet accuracy and the measured rank of the learned embeddings. For Dim-CV we keep the range of hypothesed dimensions from 1 to 10 to match the true intrinsic ranks we test here. The measured rank is the averaged resulting rank of the embedding learned by each algorithm across 30 independent runs.

We first obtain SBERT embeddings \citep{reimers-2019-sentence-bert} for 50 randomly chosen foods, then control the intrinsic dimensionality for this experiment by applying truncated singular value decomposition (ranks 1-10).
From this lower dimensional space, we generate noisy triplet comparisons (sampling 5\% of the total, 30 runs, noise=0.1) to mimic the data limited regime typical in human perceptual experiments. A detailed experimental setup is included in Appendix~\ref{section:appendix:artificial_human_experiments}.

Figure~\ref{fig:artificial_perceptual_vary_baselines_full} summarizes the results. Even in this highly undersampled setting, LORE closely tracks the intrinsic rank across all tested values, while all baseline OE algorithms default to the embedding dimension. Dim-CV not only is farther from the intrinsic rank than LORE but also takes considerably longer to run (note the log scaled y axis!) due to training multiple ordinal embeddings from hypothesis testing and cross validation. Also, LORE significantly outperforms baselines in test triplet accuracy, demonstrating robust ordinal embedding recovery even with noise, small amounts of data and realistic semantic structure in the percept while still reliably recovering the intrinsic rank.

\subsection{LORE learns low rank and accurate representations on crowdsourced data}
\label{section:results:crowdsourced_human_experiments}

To test LORE in practical, noisy settings with unknown intrinsic rank, we evaluate it alongside baselines on three representative crowdsourced human similarity datasets covering food images \citep{wilber2014cost}, material images \citep{lagunas2019similarity}, and car images \citep{kleindessner2017lens}. These datasets differ in size, query semantics, and noise, reflecting the variety and challenges of real world ordinal data with results in Table~\ref{table:real_datasets}. Further details are in Appendix~\ref{section:appendix:crowdsourced_dataset} with additional real world dataset evaluations in Appendix~\ref{section:appendix:additional_crowdsourced_experiments}.

All methods are trained on a random sample comprising of 90\% of the total triplets, Gaussian noise and the same embedding dimension. Across all datasets, LORE has comparable test triplet accuracy to existing methods, but uniquely yields a substantially lower rank (for e.g., Food-100: LORE gets rank $3.3$ vs. 15 for the others), suggesting a low rank structure. For materials, LORE gets both the highest triplet accuracy and a low rank structure of \textasciitilde 2-3 dimensions verified in UMAP visualizations from \citep{lagunas2019similarity}. For cars, extreme noise is reflected in low accuracy for all methods yet LORE's embeddings are more compact and do not degenerate to random chance like Dim-CV. There is considerable variance in time taken across methods and datasets due to the optimization characteristics of each method but LORE is the second fastest method after FORTE consistently.

Dim-CV performs poorly, likely due to its conservative approach in dimensionality selection by hypothesis testing, underfitting compared to the other methods. Dim-CV is restricted to a realistic dimensionality range (1-10) for data with unknown intrinsic rank. LORE not only learns low-rank representations but also maintains competitive triplet accuracy compared to methods that overparameterize the space, without needing dimensionality assumptions. These results show that only LORE recovers low rank structure (avoiding overparameterizing) without sacrificing significant accuracy (avoiding underfitting) from real data, enabling practical perceptual modeling.

\renewcommand{\tabularxcolumn}[1]{m{#1}}

\begin{table}[t]
    \centering
    \caption{Comparison of OEs on Real Life Ordinal Datasets}
    \label{table:real_datasets}
    
    \renewcommand{\arraystretch}{0.9} 
    
    \begin{tabularx}{\textwidth}{l *{9}{>{\centering\arraybackslash}X}}
        \toprule
        \textbf{Method} & \multicolumn{3}{c}{\textbf{Food-100}} & \multicolumn{3}{c}{\textbf{Materials}} & \multicolumn{3}{c}{\textbf{Cars}} \\
        \cmidrule(lr){2-4} \cmidrule(lr){5-7} \cmidrule(lr){8-10}
        
         & \footnotesize Test Acc. & \footnotesize Rank & \footnotesize Time (s) & \footnotesize Test Acc. & \footnotesize Rank & \footnotesize Time (s) & \footnotesize Test Acc. & \footnotesize Rank & \footnotesize Time (s) \\ 
        \midrule 
        \addlinespace[-1ex] 
        
        \makecell[l]{\textbf{LORE}\\[-0.5ex] \footnotesize (Ours)} & \makecell{82.45\\$\pm$ 0.27} & \makecell{\textbf{3.3}\\$\mathbf{\pm 0.47}$} & \makecell{6.64\\$\pm$ 3.90} & \makecell{84.08\\$\pm$ 0.19} & \makecell{\textbf{2.23}\\$\mathbf{\pm 0.43}$} & \makecell{5.77\\$\pm$ 3.37} & \makecell{52.12\\$\pm$ 1.22} & \makecell{\textbf{3}\\$\mathbf{\pm 0.45}$} & \makecell{4.45\\$\pm$ 1.62} \\[-2ex]
        
        \textbf{SOE} & \makecell{82.34\\$\pm$ 0.32} & \makecell{15\\$\pm$ 0.00} & \makecell{27.09\\$\pm$ 1.38} & \makecell{81.86\\$\pm$ 0.55} & \makecell{15\\$\pm$ 0.0} & \makecell{35.48\\$\pm$ 1.30} & \makecell{53.17\\$\pm$ 1.42} & \makecell{15.0\\$\pm$ 0.0} & \makecell{5.53\\$\pm$ 1.22} \\[-2ex]
        
        \textbf{FORTE} & \makecell{81.73\\$\pm$ 0.46} & \makecell{15\\$\pm$ 0.00} & \makecell{6.34\\$\pm$ 0.52} & \makecell{79.35\\$\pm$ 0.77} & \makecell{15\\$\pm$ 0.0} & \makecell{3.74\\$\pm$ 0.58} & \makecell{52.91\\$\pm$ 0.84} & \makecell{15.0\\$\pm$ 0.0} & \makecell{0.85\\$\pm$ 0.18} \\[-2ex]
        
        \textbf{t-STE} & \makecell{82.79\\$\pm$ 0.24} & \makecell{15\\$\pm$ 0.00} & \makecell{40.93\\$\pm$ 20.14} & \makecell{83.44\\$\pm$ 0.49} & \makecell{15\\$\pm$ 0.0} & \makecell{27.15\\$\pm$ 3.16} & \makecell{53.70\\$\pm$ 1.15} & \makecell{15.0\\$\pm$ 0.0} & \makecell{15.13\\$\pm$ 4.29} \\[-2ex]
        
        \textbf{CKL} & \makecell{82.75\\$\pm$ 0.20} & \makecell{15\\$\pm$ 0.00} & \makecell{18.41\\$\pm$ 7.89} & \makecell{83.94\\$\pm$ 0.11} & \makecell{15\\$\pm$ 0.0} & \makecell{14.77\\$\pm$ 1.79} & \makecell{54.06\\$\pm$ 1.19} & \makecell{15.0\\$\pm$ 0.0} & \makecell{4.85\\$\pm$ 0.39} \\[-2ex]
        
        \textbf{Dim-CV} & \makecell{77.67\\$\pm$ 0.02} & \makecell{1.47\\$\pm$ 0.51} & \makecell{1721.9\\$\pm$ 26.71} & \makecell{78.10\\$\pm$ 3.79} & \makecell{1.0\\$\pm$ 0.0} & \makecell{1428.6\\$\pm$ 32.84} & \makecell{50.43\\$\pm$ 1.07} & \makecell{1.0\\$\pm$ 0.0} & \makecell{270.56\\$\pm$ 8.86} \\
        \bottomrule
    \end{tabularx}
    \vspace{-3mm}
\end{table}

\subsection{LORE's learned axes are semantically interpretable}

\begin{figure}[t]
    \centering
    \includegraphics[width=1.0\textwidth]{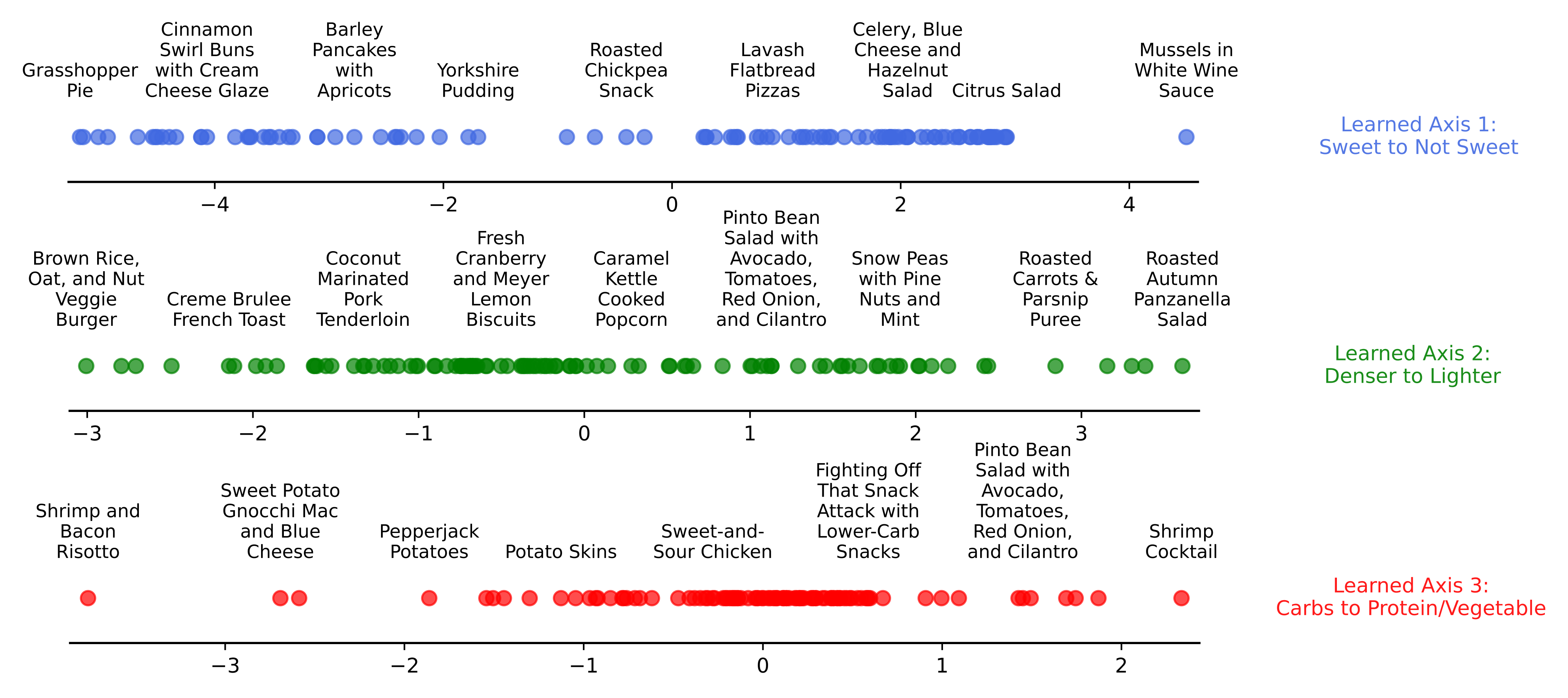}
    \caption{\textbf{LORE's learned axes are semantically interpretable}: Food groups as axis value varies for the first three learned axes of the LORE embedding learned on the Food-100 dataset. (Same embedding as one learned for Table~\ref{table:real_datasets}).}
    \label{fig:food100_interpretability}
    \vspace{-4mm}
\end{figure}

Figure~\ref{fig:food100_interpretability} shows that, without semantic supervision, LORE's first three axes for Food-100, the same embedding used for Table~\ref{table:real_datasets}, each align with interpretable food properties: from sweet to savory (Axis 1), dense to light (Axis 2), and carb-rich to protein/vegetable (Axis 3). The last axis is slightly less coherent, as is expected for axes linked to smaller singular values. These results demonstrate that LORE actually recovers semantically meaningful latent dimensions while recovering a low rank embedding. Consequently, the axes are interpretable and this property is invaluable for scientific discovery where the subjective percept is not well understood. Additional interpretability results for the other methods are included in Appendix~\ref{section:appendix:additional_interpretability} show that LORE's axes are consistently more aligned with meaningful semantic concepts than every method except CKL which is comparable to it.

\section{Discussion}
\label{section:discussion}

In this work, we introduced LORE, a framework for jointly learning the intrinsic rank and the true perceptual latent space via an ordinal embedding. Our results show that LORE consistently recovers low-dimensional representations, with ranks that closely match ground truth while maintaining competitive test triplet accuracy. Moreover, we show that LORE does not underfit the perceptual space like Dim-CV does yet does not overparameterize the space like all other baselines do. On real crowdsourced data, LORE also uncovers interpretable axes aligned with meaningful semantic concepts, making subjective perceptual spaces easier to analyze. 

One limitation is the absence of theoretical guarantees for exact rank recovery or optimal embeddings. Our method empirically performs well, but its theoretical underpinnings remain an open question as LORE's optimization is only guaranteed to reach stationary points, not global minima.

Future directions include developing theoretical guarantees and active learning methods to collect perceptual data more efficiently. We hope our work inspires further applied and theoretical advances applying LORE to uncover the structure of perceptual spaces across a range of domains.

\newpage

\section*{Reproducibility Statement}
\label{section:reproducibility}

We have taken several steps to ensure the reproducibility of our work. All code and detailed documentation for our artificial human experiments and crowdsourced human experiments are included in the supplemental material, with full experimental configurations provided in Appendix~\ref{section:appendix:artificial_human_experiments} and Appendix~\ref{section:appendix:crowdsourced_human_experiments}.

For the synthetic experiments and baseline comparisons, which require large scale parallelization and substantial computational resources, we do not provide raw code. Instead, we describe in detail the procedures and parameter settings necessary to reproduce them in Appendix~\ref{section:appendix:grid_search}.

Our main theoretical result, establishing convergence of LORE to a local optimum, includes a complete proof with all required assumptions in Appendix~\ref{section:appendix:lore_proof}. To support practical use, we provide a demo (in the supplemental material) showing how LORE can be applied to new datasets, along with additional implementation details in Appendix~\ref{section:appendix:lore_implementation}.

LORE is implemented in cblearn \citep{kunstle2024cblearn}, a Python package for ordinal embeddings and comparison-based machine learning, as \href{https://cblearn.readthedocs.io/en/stable/references/generated/cblearn.embedding.LORE.html}{\texttt{cblearn.embedding.LORE}}; we recommend it for using the method. Code to reproduce the results in this paper, using our standalone implementation of LORE built on cblearn's data utilities, is available at \href{https://github.com/vivek2000anand/lore_iclr}{https://github.com/vivek2000anand/lore\_iclr}. We believe these efforts make our work fully reproducible and accessible to the community.

\subsubsection*{Acknowledgments}
We would like to sincerely thank Belén Martín-Urcelay, Yenho Chen and Chiraag Kaushik for helpful discussions and feedback on the paper.

This work was supported by the National Science Foundation under Grant No. CCF-2107455. Additionally, supported in part by the NIH BRAIN Initiative through NIMH grant R61MH138966, the National Center for Advancing Translational Sciences of the National Institutes of Health under Award Number UL1TR002378 and KL2TR002381, and the Julian T. Hightower Chair at Georgia Tech. Vivek Anand was supported by the National Institutes of Health under the Data Science in Computational Neural Engineering Training Grant T32EB025816 and the National Defense Science and Engineering Graduate (NDSEG) Fellowship Program. The content is solely the responsibility of the authors and does not necessarily represent the official views of the National Institutes of Health.

\bibliography{references}

\begin{thebibliography}{34}
\providecommand{\natexlab}[1]{#1}
\providecommand{\url}[1]{\texttt{#1}}
\expandafter\ifx\csname urlstyle\endcsname\relax
  \providecommand{\doi}[1]{doi: #1}\else
  \providecommand{\doi}{doi: \begingroup \urlstyle{rm}\Url}\fi

\bibitem[Agarwal et~al.(2007)Agarwal, Wills, Cayton, Lanckriet, Kriegman, and Belongie]{agarwal2007generalized}
Sameer Agarwal, Josh Wills, Lawrence Cayton, Gert Lanckriet, David Kriegman, and Serge Belongie.
\newblock Generalized non-metric multidimensional scaling.
\newblock In \emph{Artificial intelligence and statistics}, pp.\  11--18. PMLR, 2007.

\bibitem[Bishop \& Nasrabadi(2006)Bishop and Nasrabadi]{bishop2006pattern}
Christopher~M Bishop and Nasser~M Nasrabadi.
\newblock \emph{Pattern recognition and machine learning}, volume~4.
\newblock Springer, 2006.

\bibitem[Bolte et~al.(2010)Bolte, Daniilidis, Ley, and Mazet]{bolte2010characterizations}
J{\'e}r{\^o}me Bolte, Aris Daniilidis, Olivier Ley, and Laurent Mazet.
\newblock Characterizations of {\l}ojasiewicz inequalities: subgradient flows, talweg, convexity.
\newblock \emph{Transactions of the American Mathematical Society}, 362\penalty0 (6):\penalty0 3319--3363, 2010.

\bibitem[Bower et~al.(2018)Bower, Jain, and Balzano]{bower2018landscape}
Amanda Bower, Lalit Jain, and Laura Balzano.
\newblock The landscape of non-convex quadratic feasibility.
\newblock In \emph{2018 IEEE International Conference on Acoustics, Speech and Signal Processing (ICASSP)}, pp.\  3974--3978. IEEE, 2018.

\bibitem[Canal et~al.(2020)Canal, Fenu, and Rozell]{canal2020active}
Gregory Canal, Stefano Fenu, and Christopher Rozell.
\newblock Active ordinal querying for tuplewise similarity learning.
\newblock In \emph{Proceedings of the AAAI Conference on Artificial Intelligence}, volume~34, pp.\  3332--3340, 2020.

\bibitem[Candes \& Recht(2008)Candes and Recht]{candesExactMatrixCompletion2008a}
Emmanuel~J. Candes and Benjamin Recht.
\newblock Exact {{Matrix Completion}} via {{Convex Optimization}}, May 2008.

\bibitem[Chen et~al.(2020)Chen, Kornblith, Norouzi, and Hinton]{chen2020simple}
Ting Chen, Simon Kornblith, Mohammad Norouzi, and Geoffrey Hinton.
\newblock A simple framework for contrastive learning of visual representations.
\newblock In \emph{International conference on machine learning}, pp.\  1597--1607. PmLR, 2020.

\bibitem[Dugas et~al.(2001)Dugas, Bengio, Dupont, Marcotte, Vincent, Garcia, and Maillet]{Dugas2001softplus}
Charles Dugas, Yoshua Bengio, François Dupont, Hugo Marcotte, Pascal Vincent, Christian Garcia, and Dominique Maillet.
\newblock Incorporating second-order functional knowledge for better option pricing.
\newblock In \emph{Advances in Neural Information Processing Systems (NeurIPS 2000)}, volume~13. MIT Press, 2001.

\bibitem[Ellis et~al.(2002)Ellis, Whitman, Berenzweig, and Lawrence]{ellis2002quest}
Daniel~PW Ellis, Brian Whitman, Adam Berenzweig, and Steve Lawrence.
\newblock The quest for ground truth in musical artist similarity.
\newblock 2002.

\bibitem[Fazel et~al.(2001)Fazel, Hindi, and Boyd]{fazel2001rank}
Maryam Fazel, Haitham Hindi, and Stephen~P Boyd.
\newblock A rank minimization heuristic with application to minimum order system approximation.
\newblock In \emph{Proceedings of the 2001 American control conference.(Cat. No. 01CH37148)}, volume~6, pp.\  4734--4739. IEEE, 2001.

\bibitem[Filip et~al.(2024)Filip, Lukavsk{\`y}, D{\v{e}}cht{\v{e}}renko, Schmidt, and Fleming]{filip2024perceptual}
Ji{\v{r}}{\'\i} Filip, Ji{\v{r}}{\'\i} Lukavsk{\`y}, Filip D{\v{e}}cht{\v{e}}renko, Filipp Schmidt, and Roland~W Fleming.
\newblock Perceptual dimensions of wood materials.
\newblock \emph{Journal of Vision}, 24\penalty0 (5):\penalty0 12--12, 2024.

\bibitem[Huber et~al.(2024)Huber, K{\"u}nstle, and Reuter]{huber2024tracing}
Lukas~S Huber, David-Elias K{\"u}nstle, and Kevin Reuter.
\newblock Tracing truth through conceptual scaling: Mapping people’s understanding of abstract concepts.
\newblock 2024.

\bibitem[Jain et~al.(2016)Jain, Jamieson, and Nowak]{jain2016finite}
Lalit Jain, Kevin~G Jamieson, and Rob Nowak.
\newblock Finite sample prediction and recovery bounds for ordinal embedding.
\newblock \emph{Advances in neural information processing systems}, 29, 2016.

\bibitem[Kleindessner \& Von~Luxburg(2017)Kleindessner and Von~Luxburg]{kleindessner2017lens}
Matth{\"a}us Kleindessner and Ulrike Von~Luxburg.
\newblock Lens depth function and k-relative neighborhood graph: versatile tools for ordinal data analysis.
\newblock \emph{Journal of Machine Learning Research}, 18\penalty0 (58):\penalty0 1--52, 2017.

\bibitem[K{\"u}nstle \& von Luxburg(2024)K{\"u}nstle and von Luxburg]{kunstle2024cblearn}
David-Elias K{\"u}nstle and Ulrike von Luxburg.
\newblock cblearn: Comparison-based machine learning in python.
\newblock \emph{Journal of Open Source Software}, 9\penalty0 (98):\penalty0 6139, 2024.

\bibitem[K{\"u}nstle et~al.(2022)K{\"u}nstle, von Luxburg, and Wichmann]{kunstle2022estimating}
David-Elias K{\"u}nstle, Ulrike von Luxburg, and Felix~A Wichmann.
\newblock Estimating the perceived dimension of psychophysical stimuli using triplet accuracy and hypothesis testing.
\newblock \emph{Journal of Vision}, 22\penalty0 (13):\penalty0 5--5, 2022.

\bibitem[Lagunas et~al.(2019)Lagunas, Malpica, Serrano, Garces, Gutierrez, and Masia]{lagunas2019similarity}
Manuel Lagunas, Sandra Malpica, Ana Serrano, Elena Garces, Diego Gutierrez, and Belen Masia.
\newblock A similarity measure for material appearance.
\newblock \emph{arXiv preprint arXiv:1905.01562}, 2019.

\bibitem[Likert(1932)]{likert1932technique}
Rensis Likert.
\newblock A technique for the measurement of attitudes.
\newblock \emph{Archives of psychology}, 1932.

\bibitem[Lu et~al.(2014)Lu, Tang, Yan, and Lin]{luGeneralizedNonconvexNonsmooth2014}
Canyi Lu, Jinhui Tang, Shuicheng Yan, and Zhouchen Lin.
\newblock Generalized {{Nonconvex Nonsmooth Low-Rank Minimization}}.
\newblock In \emph{2014 {{IEEE Conference}} on {{Computer Vision}} and {{Pattern Recognition}}}, pp.\  4130--4137, June 2014.
\newblock \doi{10.1109/CVPR.2014.526}.

\bibitem[Marjanovic \& Solo(2012)Marjanovic and Solo]{marjanovic2012l_q}
Goran Marjanovic and Victor Solo.
\newblock On $ l\_q $ optimization and matrix completion.
\newblock \emph{IEEE Transactions on signal processing}, 60\penalty0 (11):\penalty0 5714--5724, 2012.

\bibitem[Marjieh et~al.(2024)Marjieh, Sucholutsky, van Rijn, Jacoby, and Griffiths]{marjieh2024large}
Raja Marjieh, Ilia Sucholutsky, Pol van Rijn, Nori Jacoby, and Thomas~L Griffiths.
\newblock Large language models predict human sensory judgments across six modalities.
\newblock \emph{Scientific Reports}, 14\penalty0 (1):\penalty0 21445, 2024.

\bibitem[Negahban \& Wainwright(2011)Negahban and Wainwright]{negahban2011estimation}
Sahand Negahban and Martin~J Wainwright.
\newblock Estimation of (near) low-rank matrices with noise and high-dimensional scaling.
\newblock 2011.

\bibitem[Reimers \& Gurevych(2019)Reimers and Gurevych]{reimers-2019-sentence-bert}
Nils Reimers and Iryna Gurevych.
\newblock Sentence-bert: Sentence embeddings using siamese bert-networks.
\newblock In \emph{Proceedings of the 2019 Conference on Empirical Methods in Natural Language Processing}. Association for Computational Linguistics, 11 2019.
\newblock URL \url{https://arxiv.org/abs/1908.10084}.

\bibitem[Sauer et~al.(2024)Sauer, K{\"u}nstle, Wichmann, and Wahl]{sauer2024objective}
Yannick Sauer, David-Elias K{\"u}nstle, Felix~A Wichmann, and Siegfried Wahl.
\newblock An objective measurement approach to quantify the perceived distortions of spectacle lenses.
\newblock \emph{Scientific Reports}, 14\penalty0 (1):\penalty0 3967, 2024.

\bibitem[Stewart et~al.(2005)Stewart, Brown, and Chater]{stewart2005absolute}
Neil Stewart, Gordon~DA Brown, and Nick Chater.
\newblock Absolute identification by relative judgment.
\newblock \emph{Psychological review}, 112\penalty0 (4):\penalty0 881, 2005.

\bibitem[Sun et~al.(2017)Sun, Jiang, and Cheng]{sun2017convergence}
Tao Sun, Hao Jiang, and Lizhi Cheng.
\newblock Convergence of proximal iteratively reweighted nuclear norm algorithm for image processing.
\newblock \emph{IEEE Transactions on Image Processing}, 26\penalty0 (12):\penalty0 5632--5644, 2017.

\bibitem[Suárez-Díaz et~al.(2018)Suárez-Díaz, García, and Herrera]{suarez2018tutorial}
Juan~Luis Suárez-Díaz, Salvador García, and Francisco Herrera.
\newblock A tutorial on distance metric learning: Mathematical foundations, algorithms, experimental analysis, prospects and challenges (with appendices on mathematical background and detailed algorithms explanation).
\newblock \emph{arXiv preprint arXiv:1812.05944}, 2018.

\bibitem[Tamuz et~al.(2011)Tamuz, Liu, Belongie, Shamir, and Kalai]{tamuzAdaptivelyLearningCrowd2011}
Omer Tamuz, Ce~Liu, Serge Belongie, Ohad Shamir, and Adam~Tauman Kalai.
\newblock Adaptively {{Learning}} the {{Crowd Kernel}}, June 2011.

\bibitem[Terada \& Luxburg(2014)Terada and Luxburg]{terada2014local}
Yoshikazu Terada and Ulrike Luxburg.
\newblock Local ordinal embedding.
\newblock In \emph{International Conference on Machine Learning}, pp.\  847--855. PMLR, 2014.

\bibitem[Van Der~Maaten \& Weinberger(2012)Van Der~Maaten and Weinberger]{van2012stochastic}
Laurens Van Der~Maaten and Kilian Weinberger.
\newblock Stochastic triplet embedding.
\newblock In \emph{2012 IEEE International Workshop on Machine Learning for Signal Processing}, pp.\  1--6. IEEE, 2012.

\bibitem[Vankadara et~al.(2023)Vankadara, Lohaus, Haghiri, Wahab, and Von~Luxburg]{vankadara2023insights}
Leena~Chennuru Vankadara, Michael Lohaus, Siavash Haghiri, Faiz~Ul Wahab, and Ulrike Von~Luxburg.
\newblock Insights into ordinal embedding algorithms: A systematic evaluation.
\newblock \emph{Journal of Machine Learning Research}, 24\penalty0 (191):\penalty0 1--83, 2023.

\bibitem[Wang et~al.(2024)Wang, Wang, and Yang]{wang2024efficient}
Hao Wang, Ye~Wang, and Xiangyu Yang.
\newblock Efficient active manifold identification via accelerated iteratively reweighted nuclear norm minimization.
\newblock \emph{Journal of Machine Learning Research}, 25\penalty0 (319):\penalty0 1--44, 2024.

\bibitem[Wilber et~al.(2014)Wilber, Kwak, and Belongie]{wilber2014cost}
Michael Wilber, Iljung Kwak, and Serge Belongie.
\newblock Cost-effective hits for relative similarity comparisons.
\newblock In \emph{Proceedings of the AAAI Conference on Human Computation and Crowdsourcing}, volume~2, pp.\  227--233, 2014.

\bibitem[Zhang(2010)]{zhang2010nearly}
Cun-Hui Zhang.
\newblock Nearly unbiased variable selection under minimax concave penalty.
\newblock 2010.

\end{thebibliography}
\bibliographystyle{iclr2025_conference}

\newpage
\appendix

\section{Proof for Theorem 1}
\label{section:appendix:lore_proof}

\noindent \textbf{Theorem 1 (LORE converges to a stationary point).}
\textit{The sequence of OEs generated by the LORE algorithm $\{\mathbf{Z}^k\}_{k=1,2,3,\dots}$ converges. i.e.}
\begin{equation}
    \sum_{k=1}^{+ \infty} \|\mathbf{Z}^{k+1} - \mathbf{Z}^k\|_F < + \infty
\end{equation}

\begin{proof}
We use the general framework for nonconvex Schatten Quasi-Norm optimization as seen in \citep{sun2017convergence} but check the specific conditions for the LORE objective.

Let us split up the objective as follows.
\begin{align}
    \min_{\mathbf{Z}} \quad \Psi(\mathbf{Z}) &= \sum_{(a,i,j) \in T} \log(1 + \exp(1 + d(\mathbf{Z}_{a,:}, \mathbf{Z}_{i,:}) - d(\mathbf{Z}_{a,:}, \mathbf{Z}_{j,:}))) + \lambda \|\mathbf{Z}\|_p^p \\
    &= \underbrace{\sum_{(a,i,j) \in T} \log(1 + \exp(1 + d(\mathbf{Z}_{a,:}, \mathbf{Z}_{i,:}) - d(\mathbf{Z}_{a,:}, \mathbf{Z}_{j,:})))}_{f(\mathbf{Z})} + \sum_{i=1}^{\min\{N, d'\}} \lambda g[\sigma_{i}(\mathbf{Z})] .
\end{align}

There are four assumptions we need to satisfy to apply the general result from \citep{sun2017convergence}.

\begin{enumerate}[label=\textbf{A\arabic*.}]
    \item $f$ is differentiable and has a Lipschitz gradient:
    
    This stems from smoothing the triplet loss with the softplus function. The composition makes $f$ differentiable everywhere except at degenerate collapse points (which do not arise with practical initializations). The log-sum-exp structure ensures (locally) Lipschitz gradients \citep{chen2020simple}.
    
    \item $g$ is concave, nondecreasing, and Lipschitz:
    
    $g(x)=x^p$ for $x>0$ has these properties.
    
    \item $\Psi$ is coercive:
    
    For our objective, suppose $\|\mathbf{Z}\|_F \to \infty$. Then the sum of squared singular values diverges, so at least one $\sigma(\mathbf{Z}) \to \infty$. As $g(\mathbf{Z})$ is the sum of all $\sigma(\mathbf{Z})^p$, and $p > 0$, we therefore have $g(\mathbf{Z}) \to \infty$ and thus $\Psi(\mathbf{Z}) \to \infty$.
    
    \item $\Psi$ has the Kurdyka Lojasciewicz (KL) property:
    
    As established in \citep{bolte2010characterizations}, sums of o-minimal (definable) functions, such as our loss and regularizer, possess the KL property.
\end{enumerate}

With all required assumptions satisfied, Theorem 1 of \citep{sun2017convergence} applies and guarantees:
\begin{equation}
    \sum_{k=1}^{\infty} \|\mathbf{Z}^{k+1} - \mathbf{Z}^k\|_F < \infty.
\end{equation}

Therefore, the LORE algorithm converges to a stationary point.
\end{proof}

\newpage
\section{Operational Details for LORE}
\label{section:appendix:lore_implementation}

The optimization algorithm used for LORE is an adaptation of the original algorithm from \citep{sun2017convergence}. 

The function takes the initialized embedding $\mathbf{Z}^0$, the regularization parameter $\lambda$, the Lipschitz constant of $\nabla f(\cdot)$, $\mu$, and the tolerance for convergence $\text{tol}$. The exact Lipschitz constant of $f(\cdot)$ is not known but was be empirically estimated to be strictly greater than $0.013$ by the Power Iteration method. Therefore, we set $\mu=0.1$ throughout our experiments. The algorithm initializes the ordinal embedding and iteratively updates it by minimizing the smoothed ordinal loss plus Schatten-p regularization. Each step performs a proximal gradient update and singular value thresholding, repeating until convergence. Based on prior literature in the Schatten-p quasi-norm optimization literature \citep{wang2024efficient, sun2017convergence, luGeneralizedNonconvexNonsmooth2014}, we fix $p=0.5$ as it has been shown to have good empirical results across various applications. Both $\mu$ and $p$ are hyperparameters that are needed to be tuned for a practitioner. Moreover, though $\lambda$ is a clear hyperparameter, our empirical results show that a setting of $\lambda=0.01$ works well across all of our experiments and we would suggest a practitioner start with this value. The tolerance $\text{tol}$ is set to $10^{-5}$ throughout our experiments. Moreover, we cap the number of iterations to 1000 for all experiments to ensure reasonable runtimes. If longer runtimes are acceptable, this cap can be increased to improve performance. However, this is usually very marginal.

If we consider the most standard operational setting, i.e. $d' < N \ll T$, then the time complexity of each iteration is $\mathcal{O}(d'(T + N d'))$. The dominating terms here are the number of percepts and the number of triplets used with the most intensive operation is in line 8 where the gradient of $f$ is calculated and a singular value decomposition is subsequently performed. As a result, LORE is scalable for higher $N$ and $d'$. One does need to be careful as $T$ could scale with $\mathcal{O}(N^3)$ if many triplets are chosen which could slow down each iteration.

\newpage
\section{Additional plots for Regularization of LORE}
\label{section:appendix:lore_additional_plots}

\begin{figure}[H]
    \centering
    \includegraphics[width=0.9\textwidth]{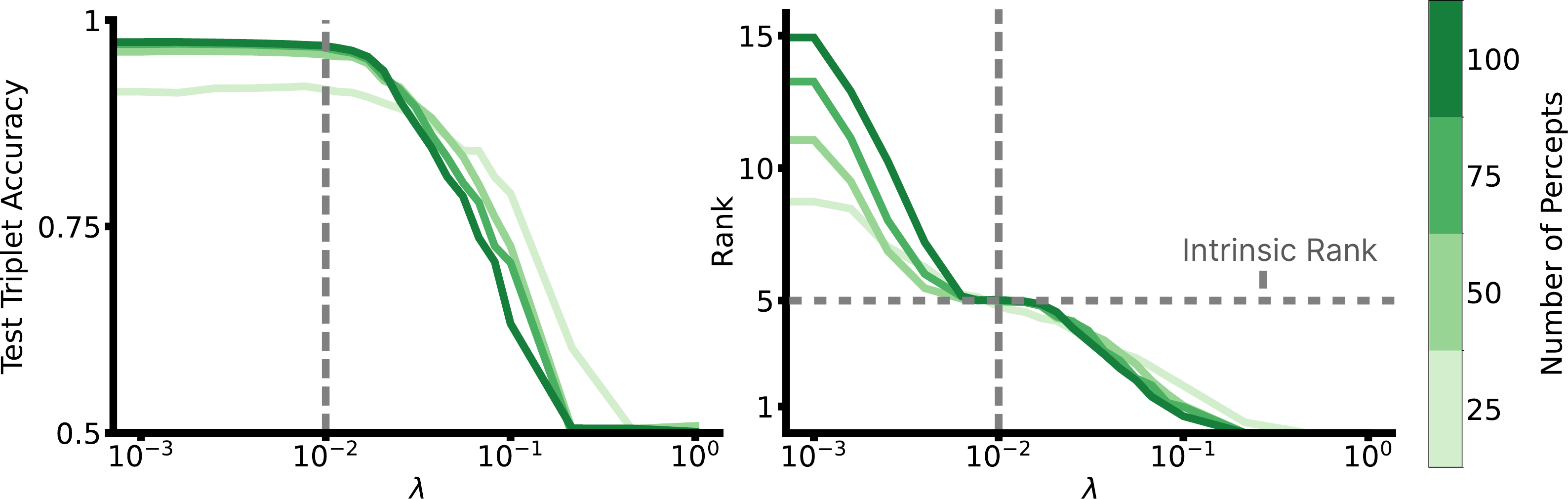}
    \caption{\textbf{LORE has high test triplet accuracy and intrinsic rank recovery across various number of percepts}. (Left) Mean test triplet accuracy vs $\lambda$ for LORE as number of percepts varies. (Right) Mean measured rank vs $\lambda$ for LORE as number of percepts varies.}
    \label{fig:synthetic_experiments_lore_num_percepts}
\end{figure}

Figure~\ref{fig:synthetic_experiments_lore_num_percepts} shows the test triplet accuracy and intrinsic rank recovery of LORE as the number of percepts varies. We see that with greater number of percepts rank recovery stays roughly constant whereas the test triplet accuracy increases significantly from 25-50 percepts. Baseline parameters are intrinsic rank = 5, fraction of queries = 0.1, noise = 0.1.

\begin{figure}[H]
    \centering
    \includegraphics[width=0.9\textwidth]{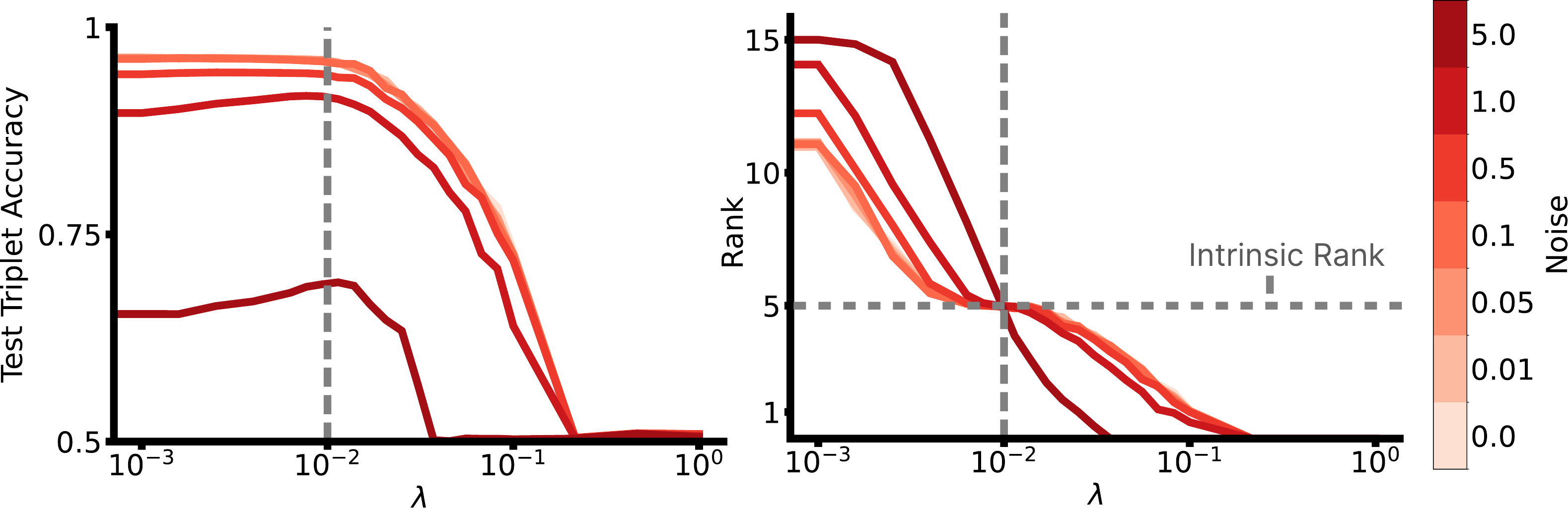}
    \caption{\textbf{LORE has high test triplet accuracy and intrinsic rank recovery across various noise levels}. (Left) Mean test triplet accuracy vs $\lambda$ for LORE as noise varies. (Right) Mean measured rank vs $\lambda$ for LORE as noise varies.}
    \label{fig:synthetic_experiments_lore_noise}
\end{figure}

Figure~\ref{fig:synthetic_experiments_lore_noise} shows the test triplet accuracy and intrinsic rank recovery of LORE as the noise varies. We see that with greater noise rank recovery and test triplet accuracy both decrease. There is a dramatic drop in test triplet accuracy from 1 to 5 noise. Baseline parameters are intrinsic rank = 5, number of percepts = 50, fraction of queries = 0.1.

\begin{figure}[H]
    \centering
    \includegraphics[width=0.9\textwidth]{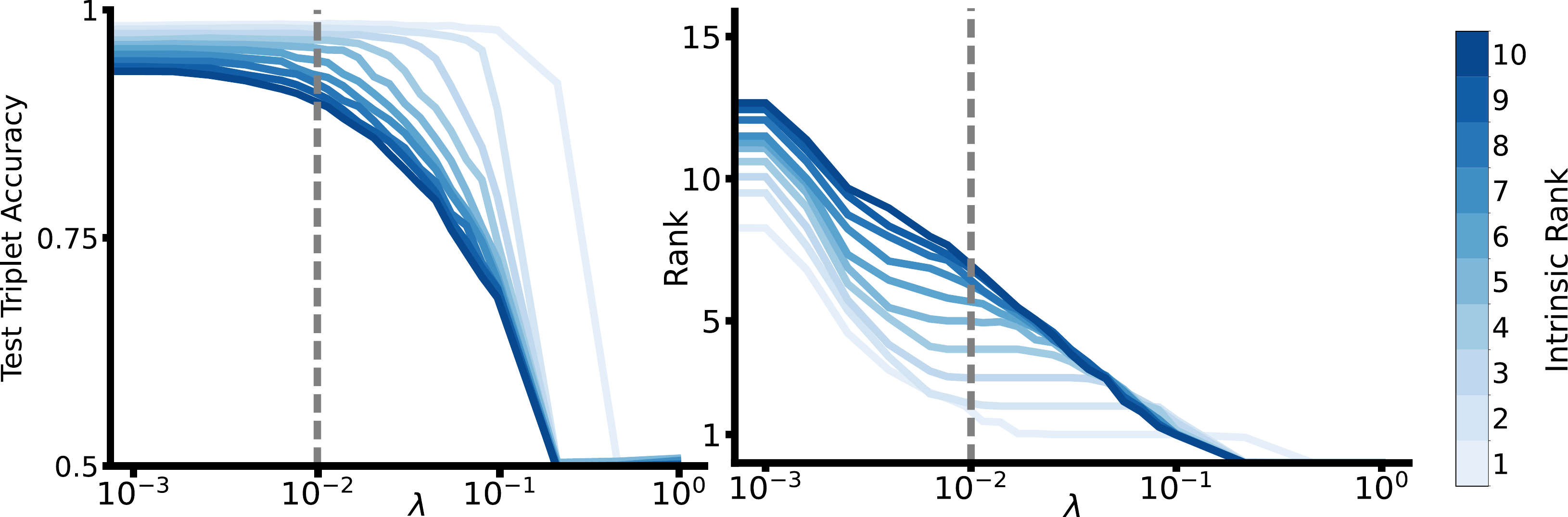}
    \caption{\textbf{LORE has high test triplet accuracy and intrinsic rank recovery across various intrinsic ranks}. (Left) Mean test triplet accuracy vs $\lambda$ for LORE as intrinsic rank varies. (Right) Mean measured rank vs $\lambda$ for LORE as intrinsic rank varies.}
    \label{fig:synthetic_experiments_lore_intrinsic_rank}
\end{figure}

Figure~\ref{fig:synthetic_experiments_lore_intrinsic_rank} shows the test triplet accuracy and intrinsic rank recovery of LORE as the intrinsic rank varies. We see that for the same regularization level till approximately 8 dimensions, LORE can recover the rank. Baseline parameters are number of percepts = 50, fraction of queries = 0.1, noise = 0.1.

These results, together with Figure~\ref{fig:synthetic_experiments_lore_num_queries}, show that LORE is quite robust to the various knobs (noise, number of percepts, intrinsic rank and number of queries) that can be tuned for OE applications and that a regularization setting of $\lambda=0.01$ learns an embedding with both high triplet accuracy yet recover the intrinsic rank.

\newpage
\section{Additional plots comparing LORE to Baselines}
\label{section:appendix:lore_baselines}

\begin{figure}[h]
    \centering
    \includegraphics[width=0.9\textwidth]{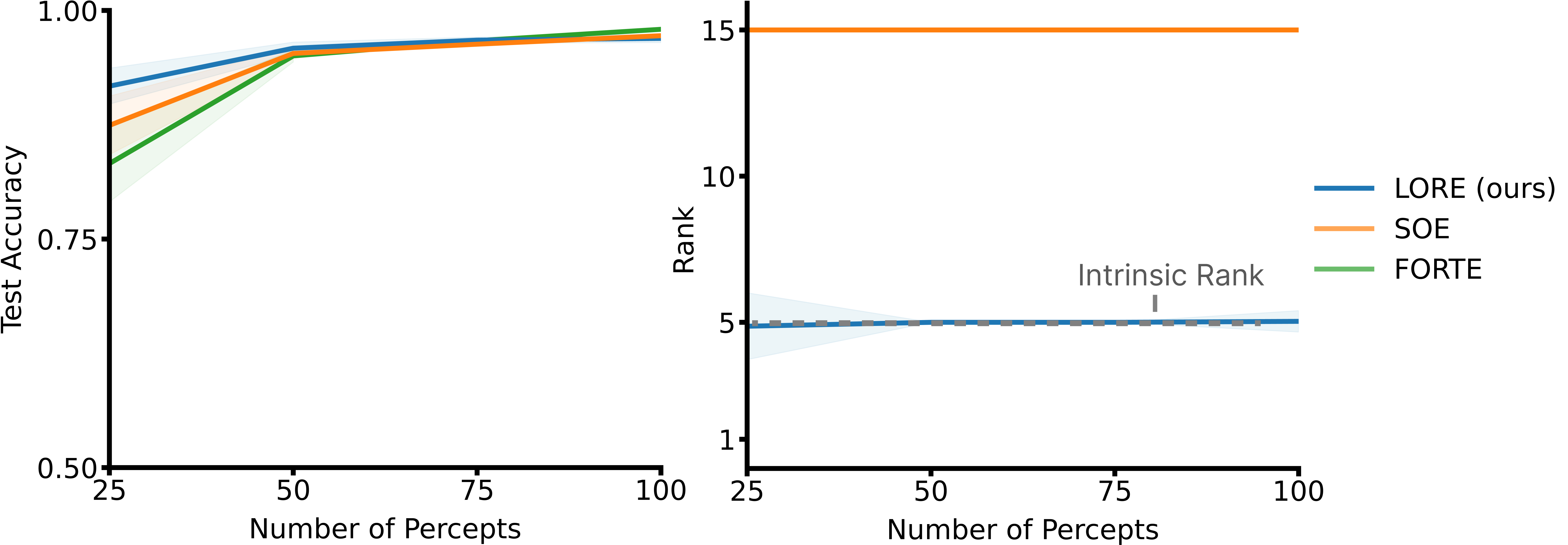}
    \caption{\textbf{Only LORE can recover the intrinsic rank while maintaining comparable test triplet accuracy}. (Left) Mean test triplet accuracy vs number of percepts used for LORE and the baselines. (Right) Mean measured rank vs number of percepts used for LORE and the baselines.}
    \label{fig:compare_num_percepts}
\end{figure}

Figure~\ref{fig:compare_num_percepts} shows the test triplet accuracy and intrinsic rank recovery of LORE and the baselines as the number of percepts varies. We see that with greater number of percepts rank recovery stays roughly constant, though spread decreases, for LORE from 25-50 percepts. Baselines again cannot recover the intrinsic rank at all. Test triplet accuracy increases from 25-50 percepts for all OE algorithms. Baseline parameters are intrinsic rank = 5, fraction of queries = 0.1, noise = 0.1.

\begin{figure}[h]
    \centering
    \includegraphics[width=0.9\textwidth]{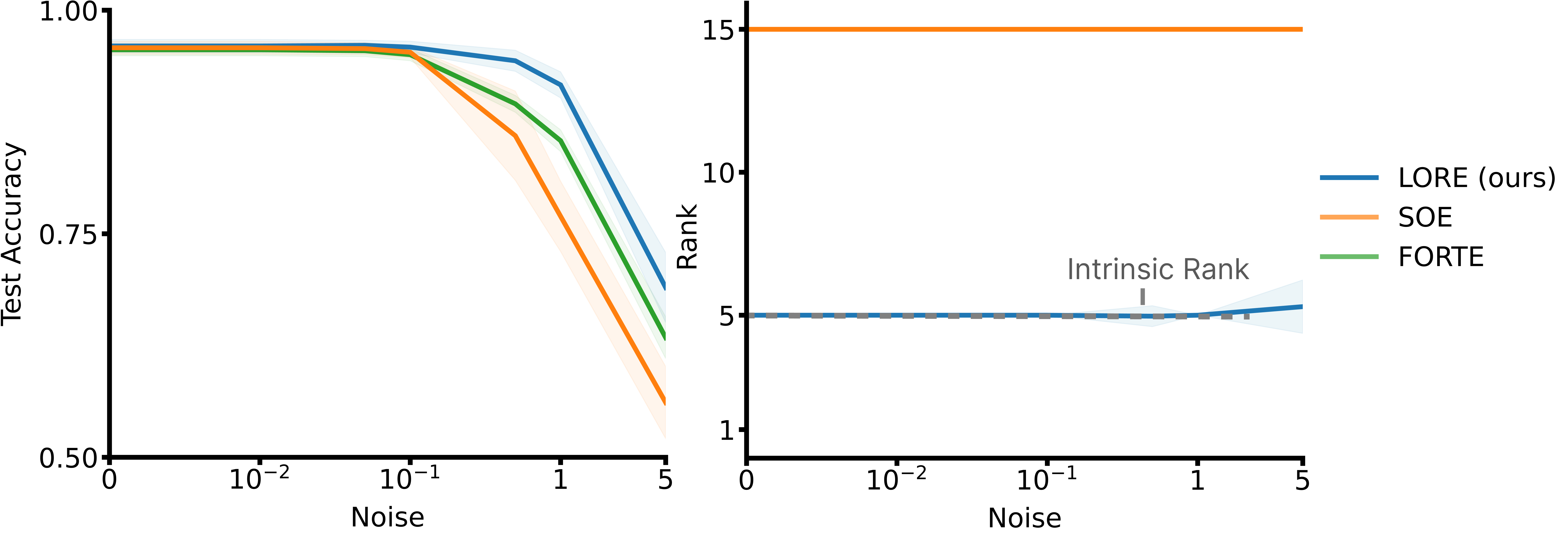}
    \caption{\textbf{Only LORE can recover the intrinsic rank while maintaining comparable test triplet accuracy}. (Left) Mean test triplet accuracy vs noise used for LORE and the baselines. (Right) Mean measured rank vs noise used for LORE and the baselines.}
    \label{fig:compare_noise}
\end{figure}

Figure~\ref{fig:compare_noise} shows the test triplet accuracy and intrinsic rank recovery of LORE and the baselines as the noise varies. We see that with greater noise, LORE is still able to recover the intrinsic rank though spread increases with noise from 1-5. The baselines cannot recover the intrinsic rank at all. Test triplet accuracy decreases with noise for all OE algorithms though LORE still performs the best. Baseline parameters are intrinsic rank = 5, number of percepts = 50, fraction of queries = 0.1.

\newpage
\section{Crowdsourced Dataset Details}
\label{section:appendix:crowdsourced_dataset}

\begin{table}[H]
    \centering
    \caption{Details of Crowdsourced Datasets Used}
    \label{table:real_datasets_details}
    \small
    \begin{tabularx}{\textwidth}{l c c p{3.5cm} X}
        \toprule
        \textbf{Datasets} & 
        \textbf{\makecell{Num.\\Percepts}} & 
        \textbf{\makecell{Num.\\Triplets}} & 
        \textbf{Triplet Type} & 
        \textbf{Notes} \\
        \midrule
        
        \makecell[l]{Food-100 \\ \citep{wilber2014cost}} & 
        100 & 
        190,376 & 
        Compared to A, which is more similar, B or C? & 
        Images of foods. Converted to similarity triplets using `cblearn` \citep{kunstle2024cblearn}. \\
        \midrule
        
        \makecell[l]{Materials \\ \citep{lagunas2019similarity}} & 
        100 & 
        22,801 & 
        Compared to A, which is more similar, B or C? & 
        Images of materials. Converted to similarity triplets using `cblearn` \citep{kunstle2024cblearn}. \\
        \midrule
        
        \makecell[l]{Cars \\ \citep{kleindessner2017lens}} & 
        68 & 
        7,097 & 
        Which of A, B, C is the most central? & 
        Images of cars. Central triplets converted to similarity triplets using `cblearn` \citep{kunstle2024cblearn}. \\
        \midrule
        
        \makecell[l]{Musicians \\ \citep{ellis2002quest}} & 
        448 & 
        118,263 & 
        Compared to A, which is more similar, B or C? & 
        Names of musicians. Converted to similarity triplets using `cblearn` \citep{kunstle2024cblearn}. \\
        
        \bottomrule
    \end{tabularx}
\end{table}

Of these datasets, the Cars dataset is known to be very noisy \citep{kleindessner2017lens, vankadara2023insights}. Food-100 has been used as a dataset to evaluate active querying methods \citep{canal2020active}. Musicians is known to be very undersampled in terms of triplets compared to the number of percepts and is not the desired operational setting of this work. A detailed characterization of the datasets is seen in Table~\ref{table:real_datasets_details}.

\newpage
\section{Additional Interpretability Plots}
\label{section:appendix:additional_interpretability}

In this section we include interpretability plots for the other ordinal embedding methods on the Food-100 dataset. These plots are analogous to Figure~\ref{fig:food100_interpretability} in the main paper. The procedure that we use to obtain these axes is to take the best performing ordinal embedding learned by each of these methods from Table~\ref{table:real_datasets} and then perform a principal component analysis (PCA) on the embedding to get the top three principal components. The reason for doing so is because ordinal embeddings cannot learn the directions of highest variance but only can learn an embedding that may be rotated, scaled or translated compared to the true perceptual space \citep{vankadara2023insights, jain2016finite}. Then, we vary the value of each principal component from its minimum to maximum value in the embedding and plot datapoints across each axis separately to see if there is any semantic meaning to the axis.

\begin{figure}[h]
    \centering
    \includegraphics[width=\textwidth]{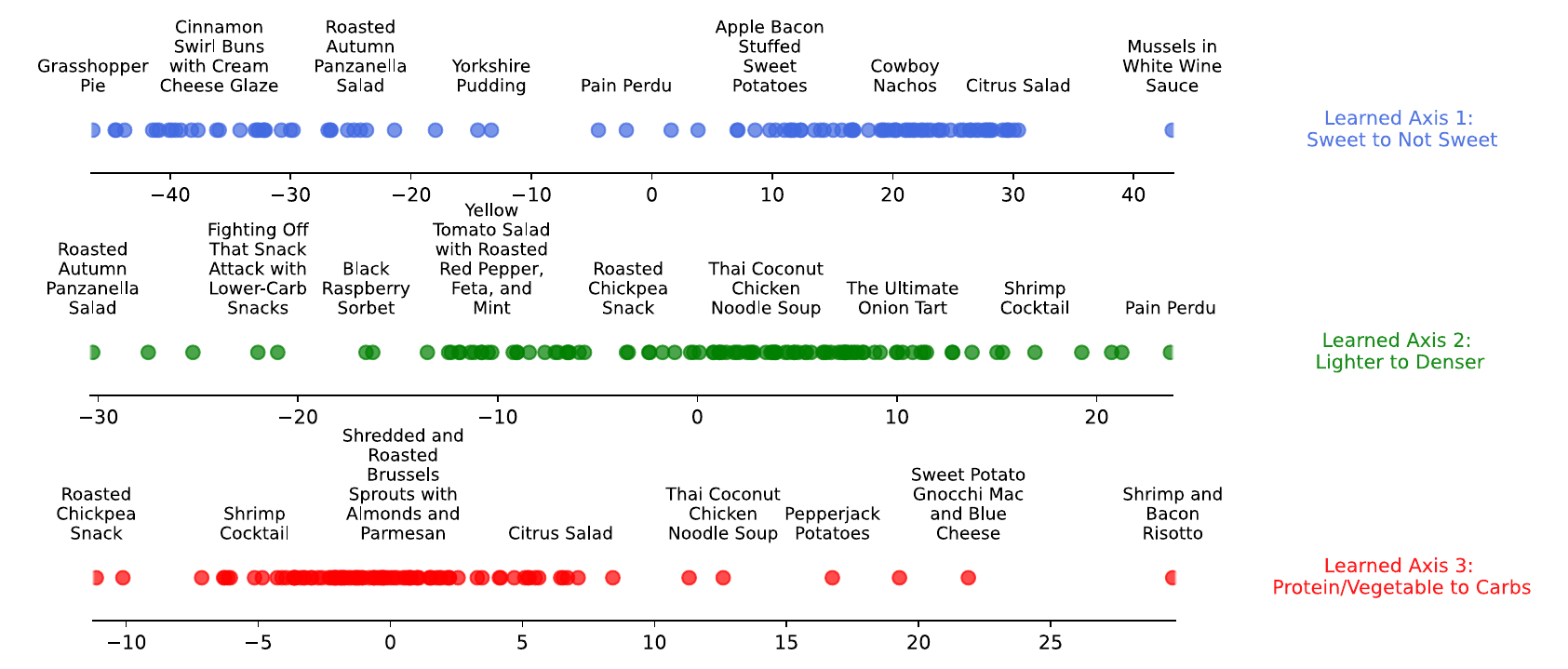}
    \caption{\textbf{CKL's learned axes are semantically interpretable}: Food groups as axis value varies for the first three learned axes of the CKL embedding learned on the Food-100 dataset. These are very similar to LORE's as seen in Figure~\ref{fig:food100_interpretability} (Same embedding as one learned for Table~\ref{table:real_datasets}).}
    \label{fig:food100_interpretability_ckl}
\end{figure}

\begin{figure}[h]
    \centering
    \includegraphics[width=\textwidth]{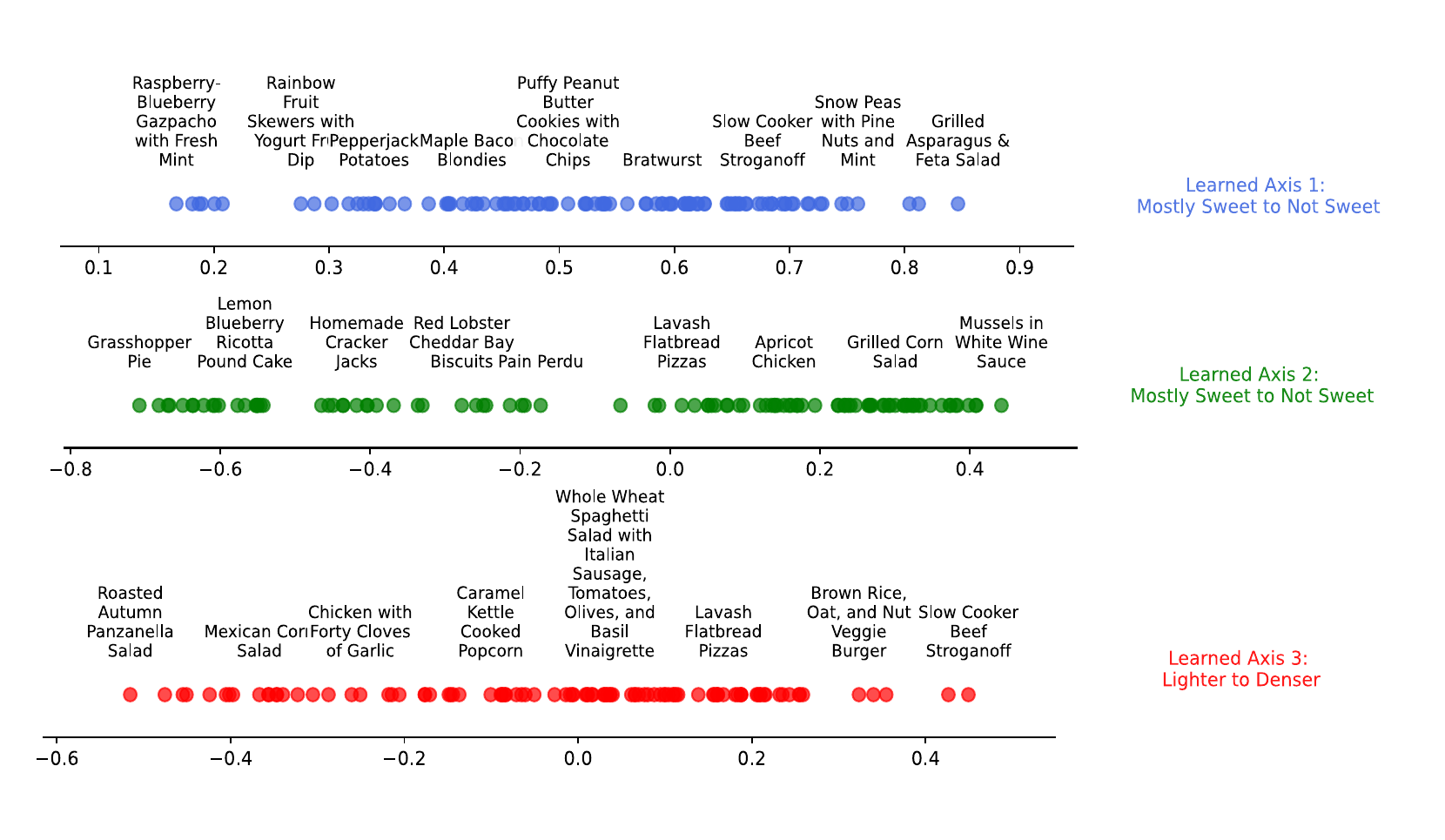}
    \caption{\textbf{SOE's learned axes are semantically interpretable but not parsimonious}: Food groups as axis value varies for the first three learned axes of the SOE embedding learned on the Food-100 dataset. These contain two dimensions that LORE does but not all Figure~\ref{fig:food100_interpretability} (Same embedding as one learned for Table~\ref{table:real_datasets}).}
    \label{fig:food100_interpretability_soe}
\end{figure}

\begin{figure}[H]
    \centering
    \includegraphics[width=\textwidth]{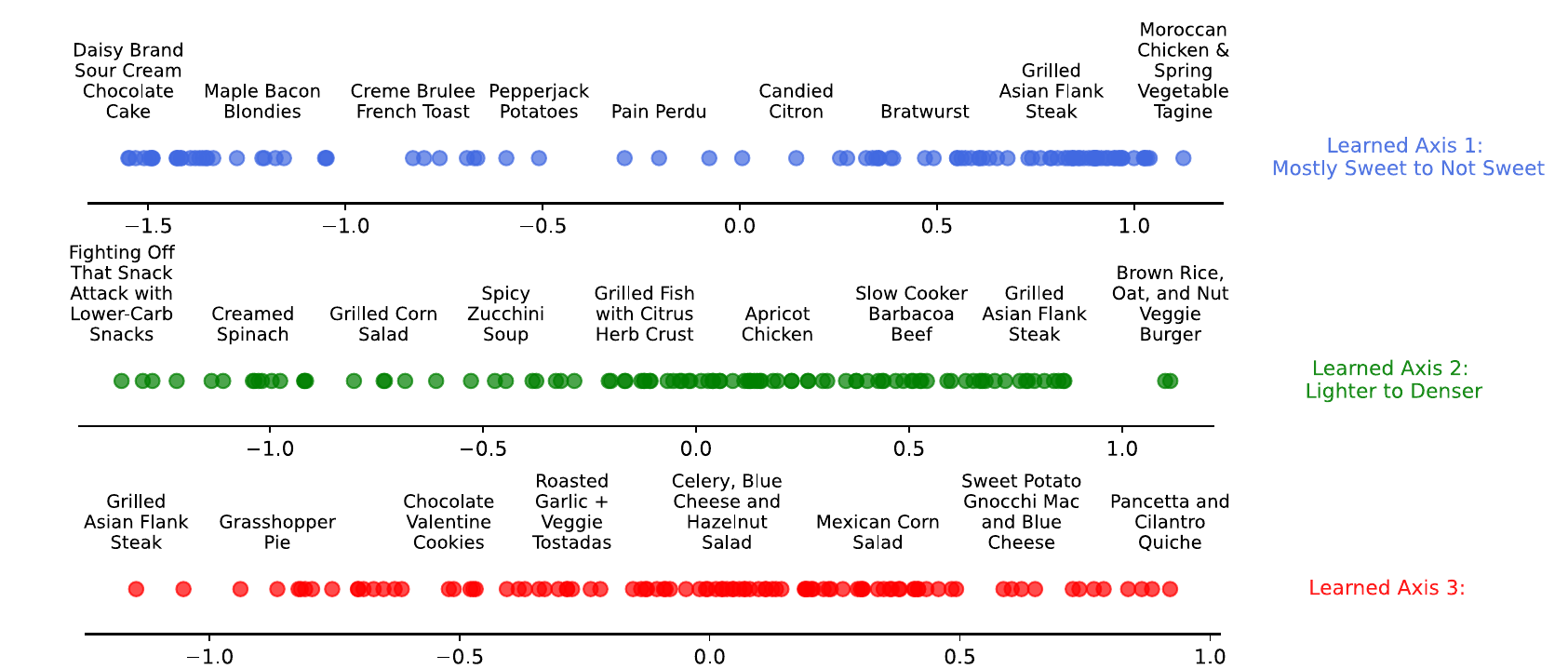}
    \caption{\textbf{FORTE's learned axes are not fully semantically interpretable}: Food groups as axis value varies for the first three learned axes of the FORTE embedding learned on the Food-100 dataset. Only the first two dimensions are semantically interpretable but the third is not compared to LORE's Figure~\ref{fig:food100_interpretability} (Same embedding as one learned for Table~\ref{table:real_datasets}).}
    \label{fig:food100_interpretability_forte}
\end{figure}

\begin{figure}[H]
    \centering
    \includegraphics[width=\textwidth]{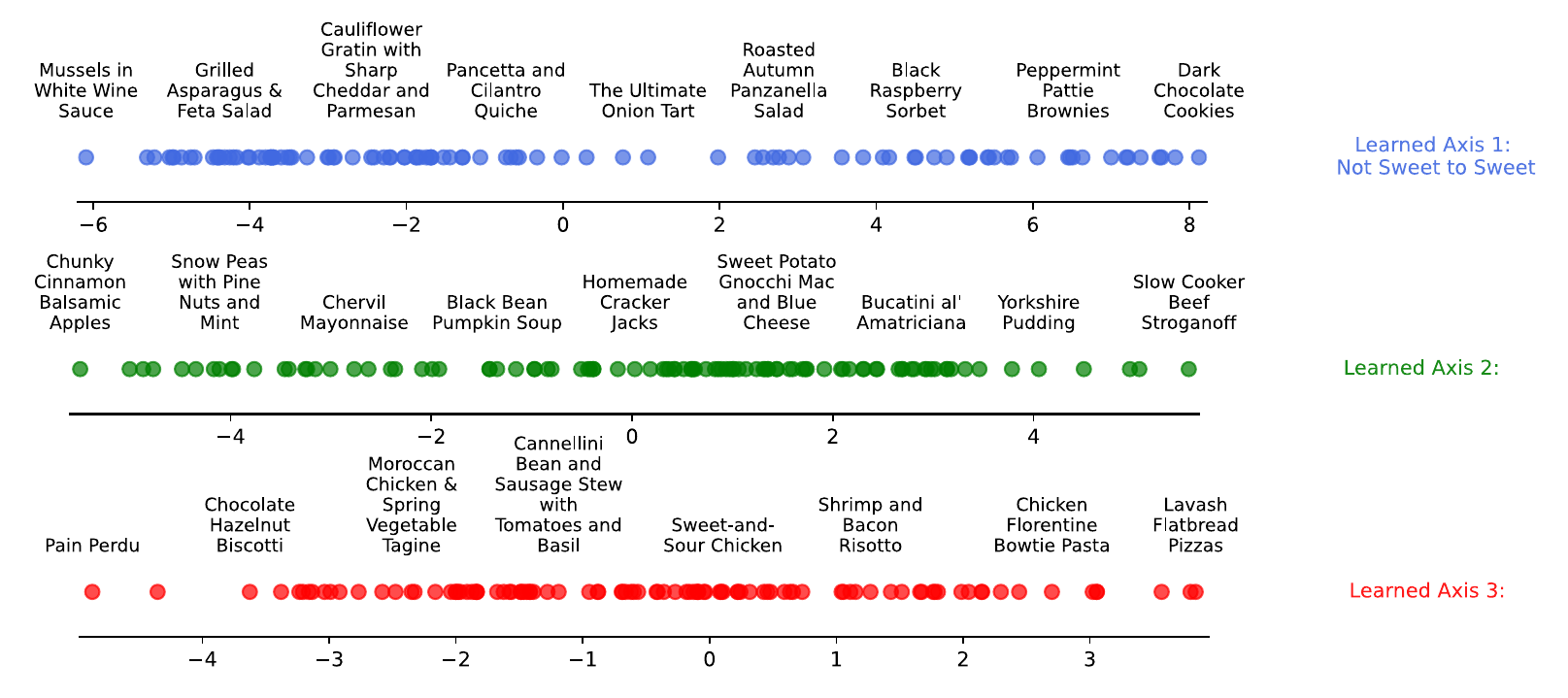}
    \caption{\textbf{t-STE's learned axes are not fully semantically interpretable}: Food groups as axis value varies for the first three learned axes of the t-STE embedding learned on the Food-100 dataset. Only the first dimension is semantically interpretable but the second and third are not compared to LORE's Figure~\ref{fig:food100_interpretability} (Same embedding as one learned for Table~\ref{table:real_datasets}).}
    \label{fig:food100_interpretability_tste}
\end{figure}

\begin{figure}[H]
    \centering
    \includegraphics[width=\textwidth]{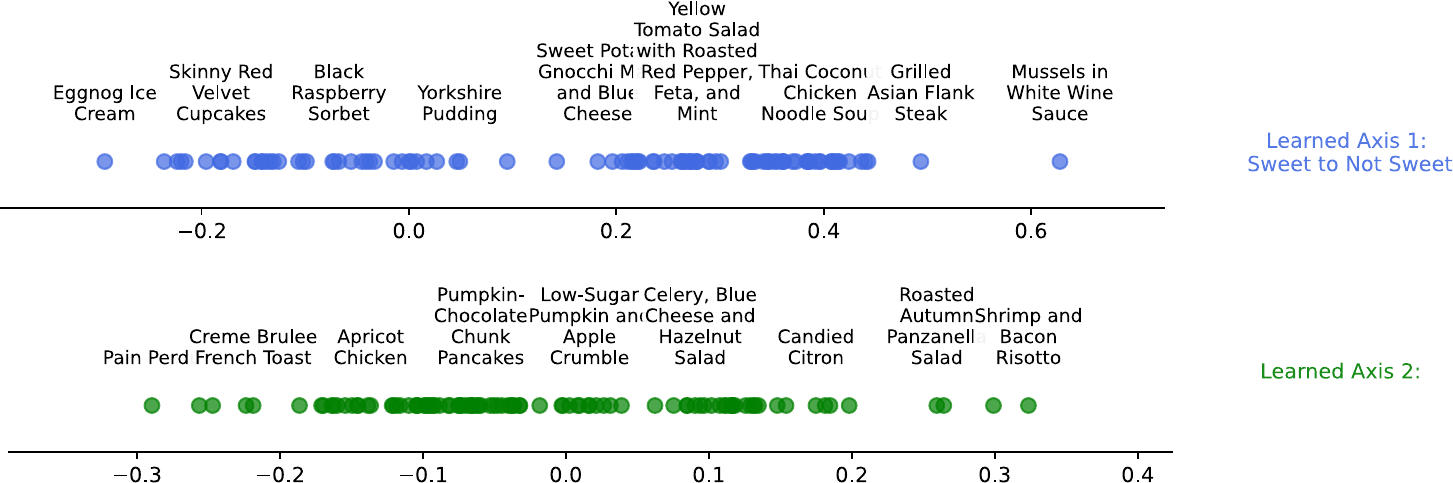}
    \caption{\textbf{Dim-CV's learned axes are not fully semantically interpretable}: Food groups as axis value varies for the first two learned axes of the Dim-CV embedding (chosen after retraining and dimensionality selection) learned on the Food-100 dataset. Only the first dimension is semantically interpretable but the second is not compared to Figure~\ref{fig:food100_interpretability} (Same embedding as one learned for Table~\ref{table:real_datasets}). Note that Dim-CV only learns two dimensions so we only show two axes here.}
    \label{fig:food100_interpretability_kunstle}
\end{figure}

From the above plots, we see that only CKL is able to learn all the three interpretable dimensions that LORE is able to learn. Both FORTE and TSTE are able to learn one or two interpretable dimensions but not all three. While SOE does learn all three interpretable dimensions, it splits sweetness to not sweet across two dimensions rather than one as LORE and CKL do. Dim-CV meanwhile though it learns a low rank embedding (only two dimensions), the second axis is not interpretable at all. Moreover, as seen in Table~\ref{table:real_datasets}, it takes an order of magnitude longer to train. Though both CKL and LORE are able to learn comparable interpretable dimensions, LORE is superior as it also learns a lower rank representation as well indicating that it does capture only the most important dimensions of the perceptual space without underfitting or overparameterizing.

\clearpage
\section{Examining the effect of $p$ on LORE}
\label{section:appendix:lore_effect_on_p}

In the following results we examine the effect of the $p$ in the ordinal embedding and rank recovery for LORE. Specifically, we vary $p$ with three values: 0.1, 0.5 (what we use in the paper) and 1 (equivalent to the nuclear norm). Like in \ref{section:appendix:lore_additional_plots}, we vary the number of percepts, noise and intrinsic rank while keeping the other parameters constant to see how $p$ affects the test triplet accuracy and rank recovery albeit at a reduced scale due to the computational cost of experiments. We include code for this experiment despite the additional time needed to run it. We find that $p=1$ has quite similar results to $p=0.5$ albeit with a reduced optimal regularization level of around $\lambda = 10^{-2.5}$. However, we notice that the leeway to choose the regularization parameter is much higher in absolute terms for $p=0.5$ than $p=1$ as the performance of $p=1$ drops off more sharply as we deviate from the optimal regularization level. This is especially apparent for high noise levels. $p=0.1$ however is essentially useless to obtain lower rank solutions, at least for the range of regularization that we have examined. The baseline parameters we use for all of them, unless we are varying them, are number of percepts = 50, intrinsic rank = 5, fraction of queries = 0.1 and noise = 0.1.

\begin{figure}[H]
  \centering
  \includegraphics[width=0.85\linewidth]{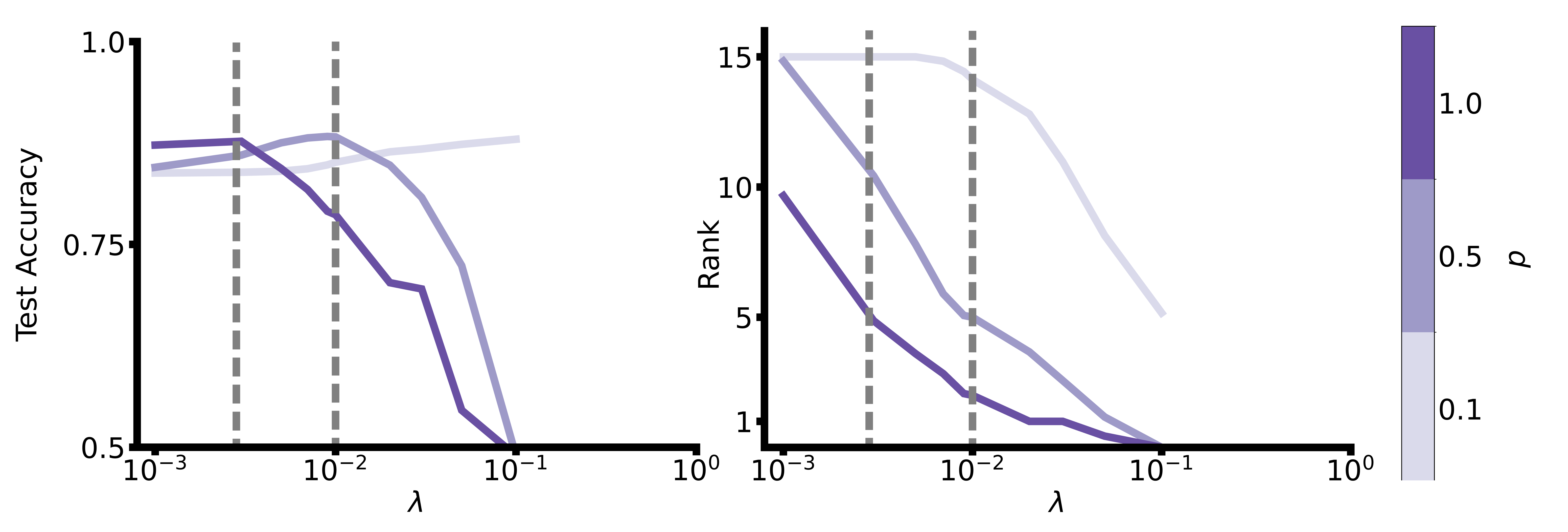}
  \caption{\textbf{Both $p=0.5$ and $p=1.0$ have stable rank recovery settings.}. (Left) Mean test triplet accuracy vs $\lambda$ for LORE. (Right) Mean measured rank vs $\lambda$ for LORE.}
  \label{fig:vary_p_just_p}
\end{figure}

\begin{figure}[h]
  \centering
  \includegraphics[width=0.85\linewidth]{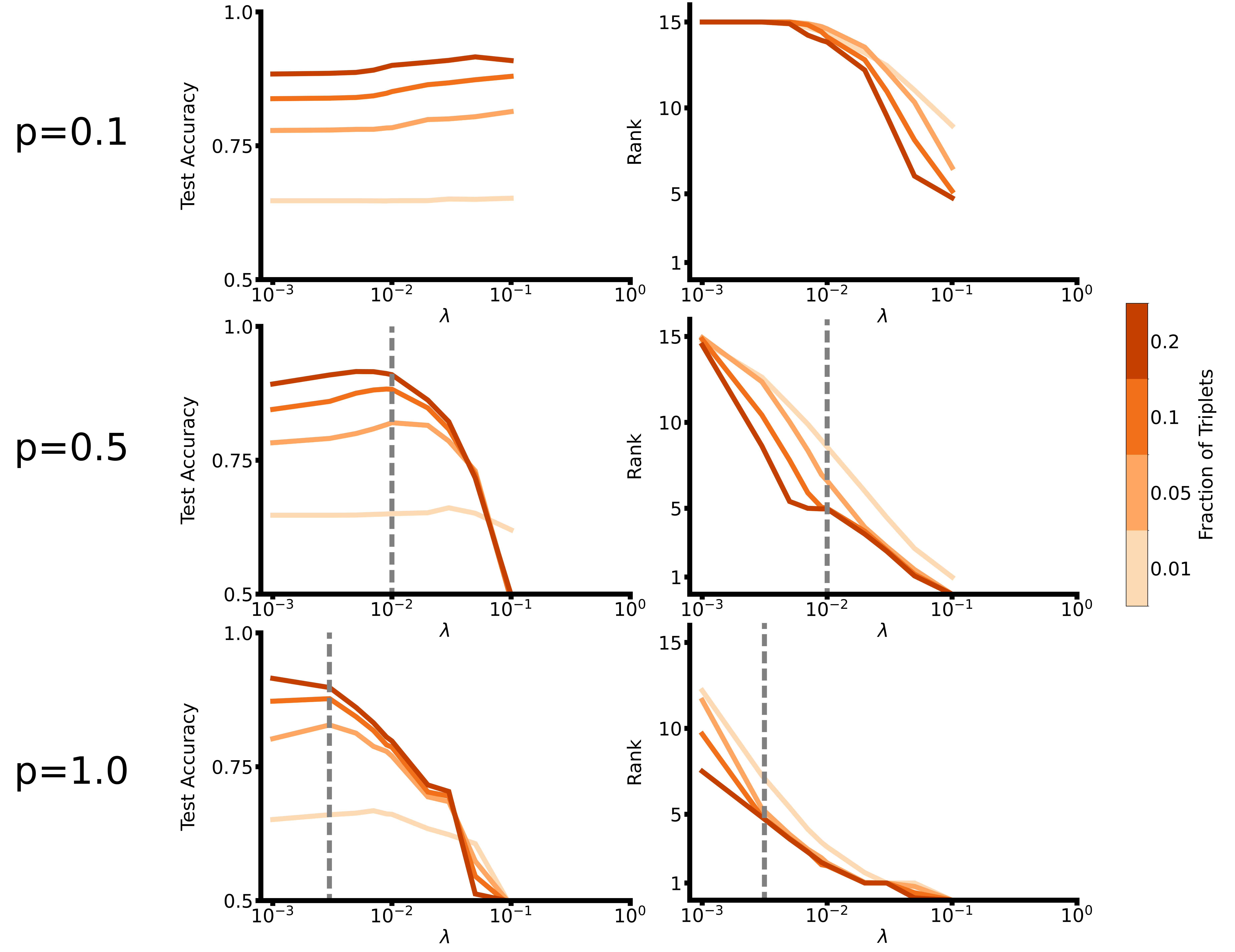}
  \caption{\textbf{$p=0.5$ has the most stable rank recovery settings as fraction of train triplets varies}. (Left) Mean test triplet accuracy vs $\lambda$ for LORE as fraction of triplets used varies. (Right) Mean measured rank vs $\lambda$ for LORE as fraction of triplets used varies.}
  \label{fig:vary_p_train_triplets}
\end{figure}

\begin{figure}[h]
  \centering
  \includegraphics[width=0.85\linewidth]{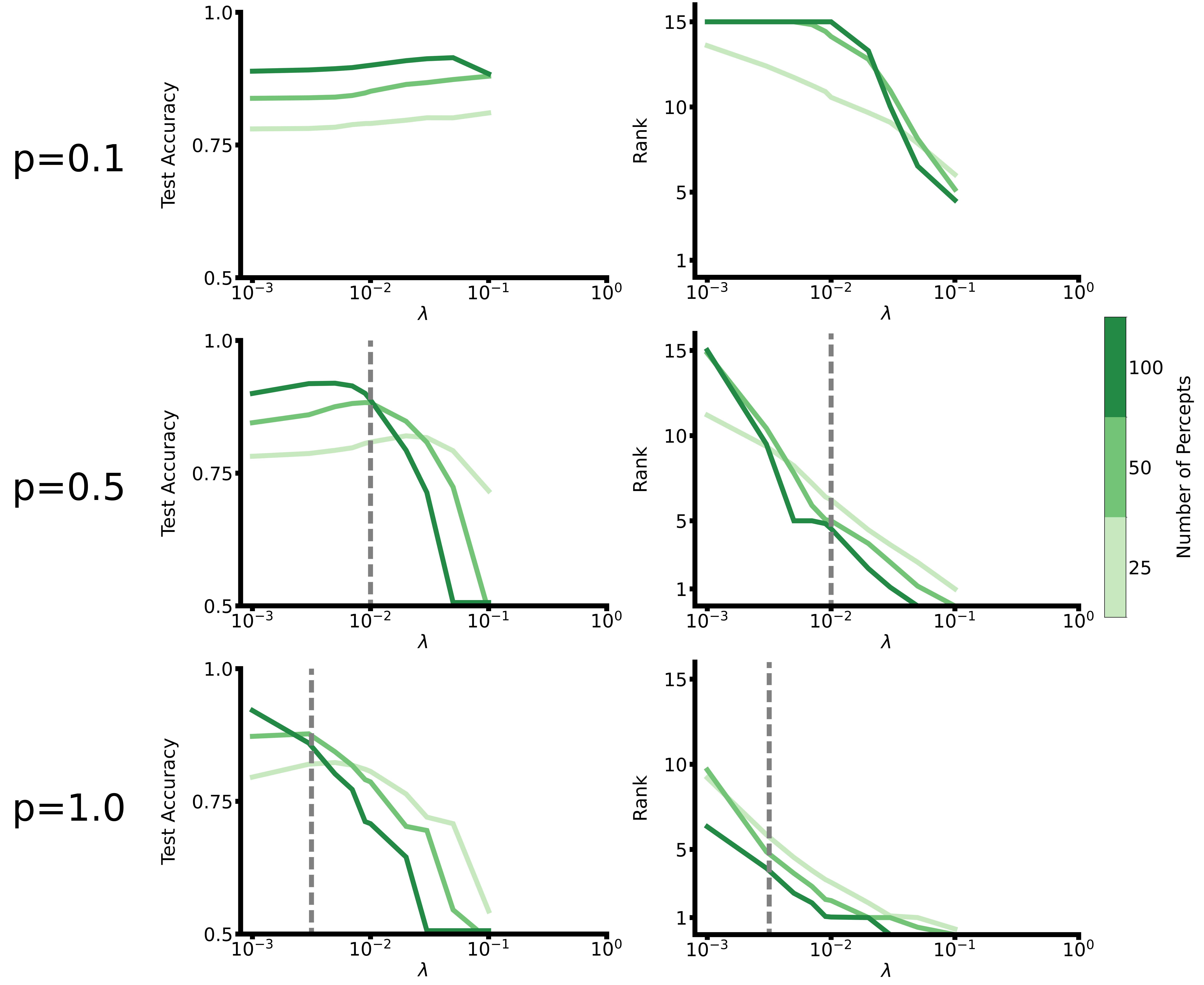}
  \caption{\textbf{$p=0.5$ has the most stable rank recovery settings as number of percepts varies}. (Left) Mean test triplet accuracy vs $\lambda$ for LORE as number of percepts varies. (Right) Mean measured rank vs $\lambda$ for LORE as number of percepts varies.}
  \label{fig:vary_p_num_percepts}
\end{figure}

\begin{figure}[h]
  \centering
  \includegraphics[width=0.85\linewidth]{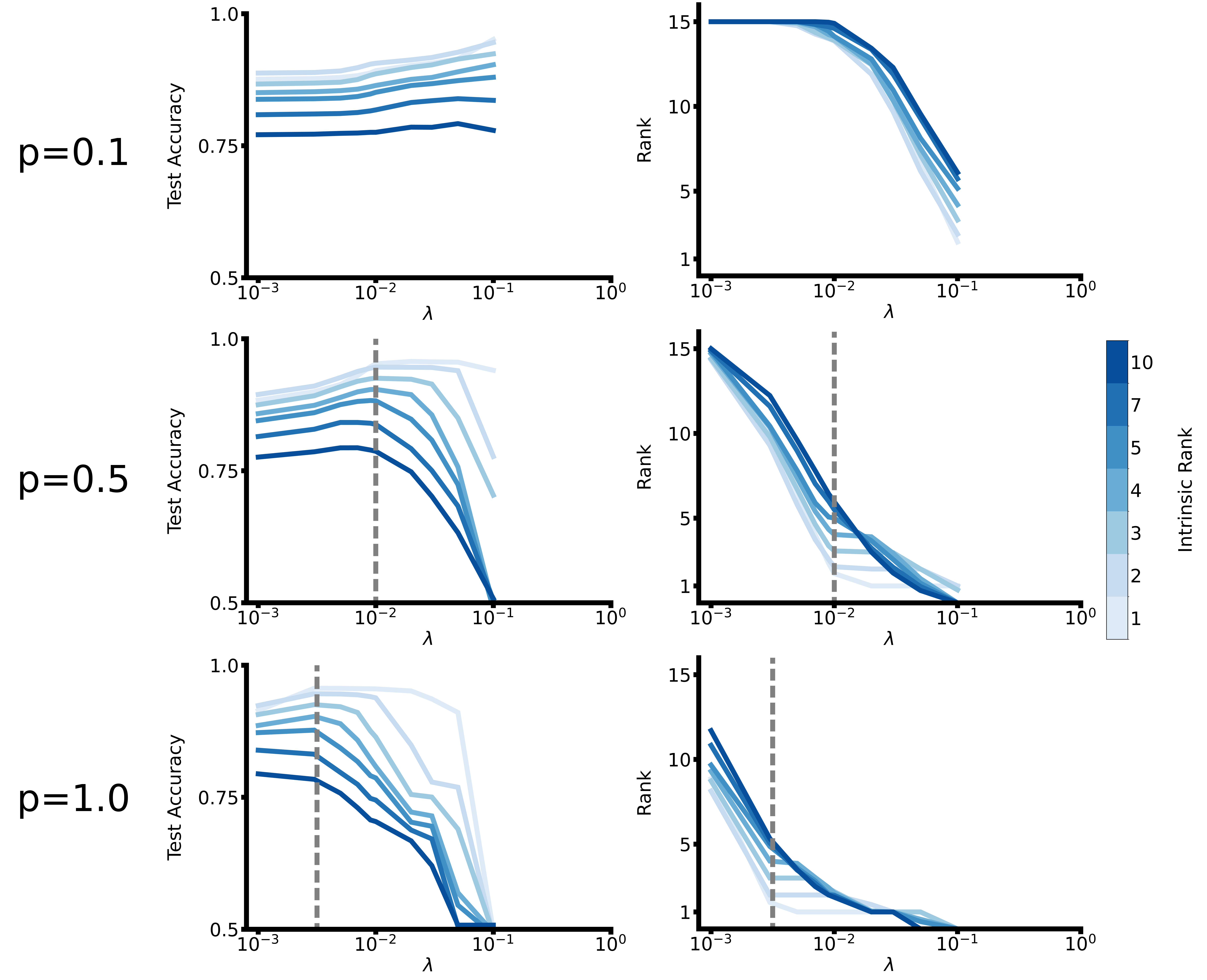}
  \caption{\textbf{Both $p=0.5$ and $p=1.0$ have stable rank recovery settings as intrinsic rank varies}. (Left) Mean test triplet accuracy vs $\lambda$ for LORE as intrinsic rank varies. (Right) Mean measured rank vs $\lambda$ for LORE as intrinsic rank varies.}
  \label{fig:vary_p_intrinsic_rank}
\end{figure}

\begin{figure}[H]
  \centering
  \includegraphics[width=0.85\linewidth]{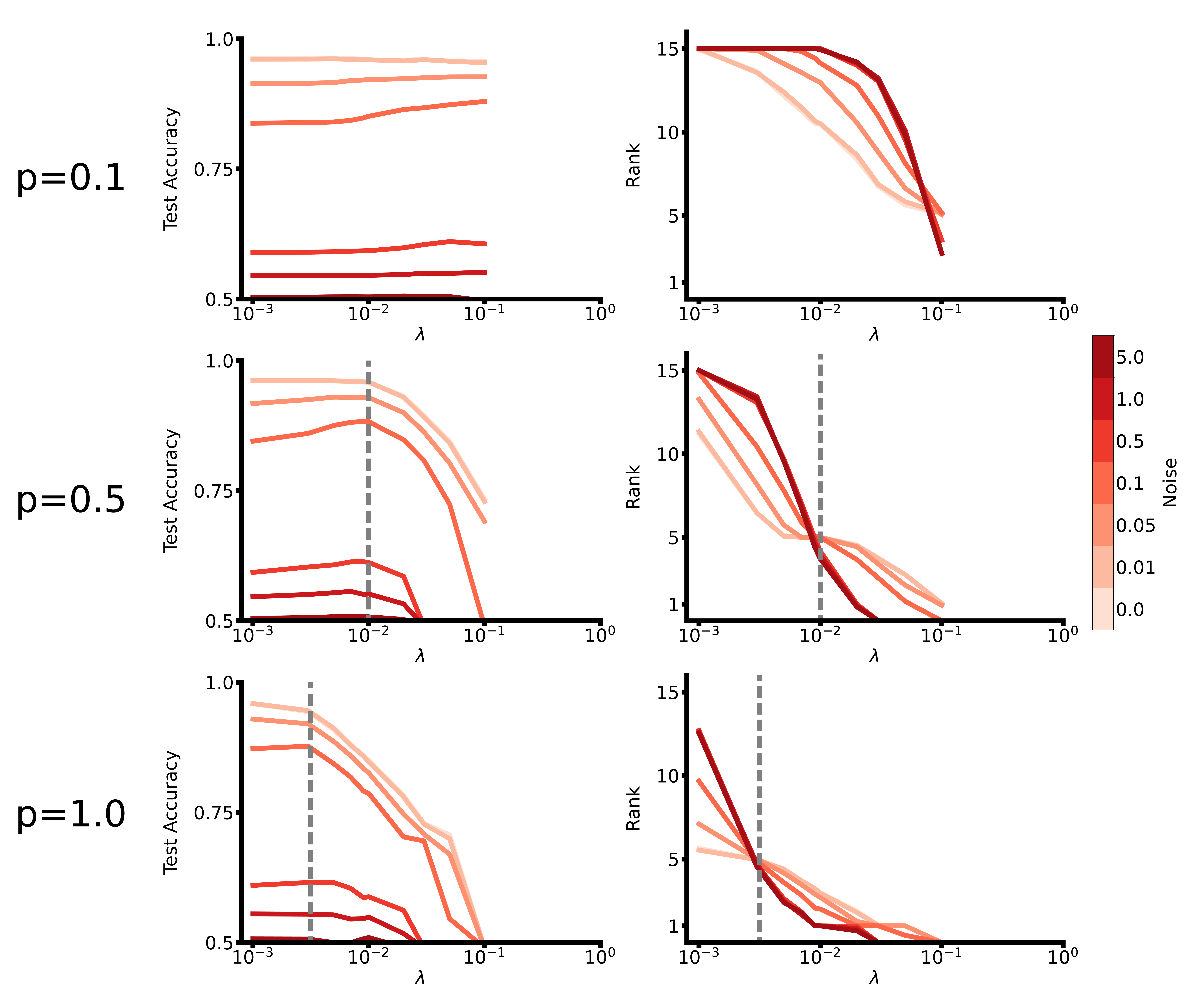}
  \caption{\textbf{Both $p=0.5$ and $p=1.0$ have stable rank recovery settings as noise varies}. (Left) Mean test triplet accuracy vs $\lambda$ for LORE as noise varies. (Right) Mean measured rank vs $\lambda$ for LORE as noise varies.}
  \label{fig:vary_p_intrinsic_noise}
\end{figure}

\clearpage

\section{Experimental Setup for Section~\ref{section:results:synthetic_experiments} and Section~\ref{section:results:synthetic_experiments:compare_baselines}}
\label{section:appendix:grid_search}

This experiment was performed on a SLURM server with over 30 GPUs of varying quality and compute power. We do not include the scripts used to run those experiments as they are highly complex due to parallelism and take too long to run (over 8 days). However, a quick rundown of the experiment is given below.

A grid search over all the following parameters was performed for these experiments. The grid search was performed in parallel over 30 GPUs. Each experiment was run for 30 runs with different random seeds. The results were averaged over the 30 runs and the standard deviation was calculated. 

In our experiments, the various knobs we tune are as follows.
\begin{itemize}
    \item \textbf{Number of Percepts} ($N$): We vary it from \texttt{[25, 50, 75, 100]} and use 50 Percepts as a default. We do not increase the number of percepts beyond 100 as the number of queries increases combinatorially. Additionally, this is not a practical number of percepts to collect for perceptual experiments.
    \item \textbf{True Dimension} ($d$): We vary it from \texttt{[1, 2, 3, 4, 5, 6, 7, 8, 9, 10]} and use 5 as a default. We do not examine over 10 dimensions as it is not possible to resolve that many dimensions without increasing the number of percepts due to the curse of dimensionality \citep{bishop2006pattern}.
    \item \textbf{Fraction of Queries used}: we vary it from \texttt{[0.1, 0.2, 0.3, 0.4, 0.5, 0.6, 0.7, 0.8, 0.9]} and use 0.1 as a default. 
    \item \textbf{Noise} ($\sigma^2$): we vary it from \texttt{[0, 0.01, 0.05, 0.1, 0.5, 1.0, 5.0]} and use 0.1 as a default.
    \item \textbf{Regularization} ($\lambda$): This is only for LORE but we vary it with \texttt{[0, 0.001, 0.00158489, 0.00251189, 0.00398107, 0.00630957, 0.00768625, 0.00936329, 0.01, 0.01140625, 0.01389495, 0.01692667, 0.02061986, 0.02511886, 0.0305995, 0.03727594, 0.0454091, 0.05531681, 0.06738627, 0.08208914, 0.1, 0.21544347, 0.46415888, 1.]} and use 0.01 as a default. 
    \item \textbf{Embedding Dimension} ($d'$): This is the dimension of the embedding we are trying to learn. This is only for the baselines other than LORE. We vary it from \texttt{[1, 2, 3, 4, 5, 6, 7, 8, 10, 12, 15]} and use 15 as a default. 
\end{itemize}

The metrics we measure are as follows
\begin{itemize}
    \item \textbf{Test Triplet Accuracy}: The accuracy of the test triplets on the test set. This is the main metric we use to measure performance.
    \item \textbf{Measured Rank}: The rank of the embedding matrix. This is a measure of how well the algorithm is able to recover the intrinsic rank of the data. We measure this by taking the SVD of the embedding matrix and counting the number of non-zero singular values. Specifically, we use the \texttt{rank} function from the \texttt{numpy} library to compute the rank of the embedding matrix. 
    \item \textbf{Peak Signal to Noise Ratio}: The PSNR is a measure of the quality of the recovered matrix. However, note that the recovered embedding matrix has to be aligned to the true percepts matrix to compute the PSNR. The specific formulation is described in Appendix~\ref{section:appendix:metrics}.
    \item \textbf{Normalized Procrustes Distance}: The NPD is a measure of how well the recovered matrix matches the true matrix up to rotation, scaling and translation. To perform procrustes analysis, true percepts $\mathbf{P} \in \mathbb{R}^{N \times d}$ and the computed embedding $\mathbf{Z} \in \mathbb{R}^{N \times d'}$ must be the same shape. Therefore, we use the same subspace alignment technique to ensure that the two matrices have the same shape. The specific formulation is described in Appendix~\ref{section:appendix:metrics}.
\end{itemize}

It should be noted that test triplet accuracy and measured rank are the main metrics we use to measure performance as the other metrics require knowledge of the percepts $\mathbf{P}$ which is not known in practice.

\newpage
\section{Experimental Setup for Section~\ref{section:results:artificial_human_experiments}}
\label{section:appendix:artificial_human_experiments}

50 random foods were chosen from the Food-100 dataset \citep{wilber2014cost}. This was run on a server with 1 RTX3080 GPU and 128 GB of RAM. The names of the specific percepts are as follows.

\begin{verbatim}
['Cinnamon Swirl Buns with Cream Cheese Glaze',
 'Shrimp and Bacon Risotto',
 'Shrimp Cocktail',
 'Homemade Cracker Jacks',
 'Creme Brulee French Toast',
 'Red Lobster Cheddar Bay Biscuits',
 'Apple Bacon Stuffed Sweet Potatoes',
 'Sweet-and-Sour Chicken',
 'Pumpkin-Chocolate Chunk Pancakes',
 'Chocolate Hazelnut Biscotti',
 'Eggnog Ice Cream',
 'Celery, Blue Cheese and Hazelnut Salad',
 'Shredded and Roasted Brussels Sprouts with Almonds and Parmesan',
 'Low-Sugar Pumpkin and Apple Crumble',
 'Roasted Sweet Potatoes Recipe with Double Truffle Flavor and Parmesan',
 'Chicken Florentine Bowtie Pasta',
 'White Whole Wheat Pizza Dough',
 'Chervil Mayonnaise',
 'Pork Tenderloin in Tomatillo Sauce',
 'Yellow Tomato Salad with Roasted Red Pepper, Feta, and Mint',
 'Daisy Brand Sour Cream Chocolate Cake',
 'Shredded Brussels Sprouts & Apples',
 'Mexican Corn Salad',
 'Potato Skins',
 'Caramel Kettle Cooked Popcorn',
 'Roasted Garlic + Veggie Tostadas',
 'Pan Seared Scallops with Baby Greens and Citrus Mojo Vinaigrette',
 'Lemon Cranberry Scones',
 'Warm Butternut and Chickpea Salad with Tahini Dressing',
 'Fighting Off That Snack Attack with Lower-Carb Snacks',
 'Chicken with Forty Cloves of Garlic',
 'Edna Mae’s Sour Cream Pancakes',
 'Sweet Potato Gnocchi Mac and Blue Cheese',
 'Yorkshire Pudding',
 'Luscious Lemon Squares',
 'Japanese Pizza',
 'Grilled Asparagus & Feta Salad',
 'Grilled Corn Salad',
 'Garlic Meatball Pasta',
 'Roasted Autumn Panzanella Salad',
 'Coconut Marinated Pork Tenderloin',
 'Black Raspberry Sorbet',
 'Mini Whole Wheat BBQ Chicken Calzones',
 'Mussels in White Wine Sauce',
 'Brown Rice, Oat, and Nut Veggie Burger',
 'Dark Chocolate Cookies',
 'Citrus Salad',
 'Roasted Carrots & Parsnip Puree',
 'South African Cheese, Grilled Onion & Tomato Panini (Braaibroodjie)',
 'Pinto Bean Salad with Avocado, Tomatoes, Red Onion, and Cilantro']
\end{verbatim}

These names are passed to the \texttt{SBERT} library \citep{reimers-2019-sentence-bert} with the "all-mpnet-base-v2" model to get a 768 dimensional LLM embedding. To simulate various possible intrinsic ranks, we use the truncated singular value decomposition to constrain the "true" perceptual representations of foods to intrinsic ranks 1-10. Specifically, the truncated SVD is the following.

The singular value decomposition of a matrix $\mathbf{P}' \in \mathbb{R}^{N \times d}$ is given by
\begin{equation}
    \mathbf{P}' = \mathbf{U} \mathbf{\Sigma} \mathbf{V}^T 
\end{equation}

Here $\mathbf{U} \in \mathbb{R}^{N \times N}$, $\mathbf{S} \in \mathbb{R}^{N \times 768}$ and $\mathbf{V} \in \mathbb{R}^{768 \times 768}$. $\mathbf{U}$ and $\mathbf{V}$ have orthonormal columns and $\Sigma$ is a diagonal matrix with singular values $\sigma_1 \ge \sigma_2 \ge \dots \ge \sigma_N >0$. The intrinsic rank of the matrix is the number of non-zero singular values, which in this case is $N$ before truncation. We can truncate the SVD to fix an intrinsic rank of $d$.

If $\mathbf{U}_d \in \mathbb{R}^{N \times d}$ and $\mathbf{V}_d \in \mathbb{R}^{768 \times d}$ are the first $d$ columns of $\mathbf{U}$ and $\mathbf{V}$ respectively, $\Sigma_d \in \mathbb{R}^{d \times d}$ with the biggest $d$ singular values in the diagonal entries and otherwise 0 then we can write the truncated SVD of $\mathbf{P}'$ to get the our "true" perceptual representation of the foods as
\begin{equation}
    \mathbf{P'} = \mathbf{U}_d \mathbf{\Sigma}_d \mathbf{V}_d^T
\end{equation}

Then, we query just 5\% of the total possible triplets (2940 out of a possible 58800) with 0.1 variance Gaussian noise added to the triplet comparisons to simulate the noise that humans have when answering triplet queries. 5\% of the total queries is a reasonable setting common to most perceptual scaling experiments as it is usually the bare minimum of queries needed to fit a good embedding \citep{kunstle2022estimating, vankadara2023insights}. We repeat this sampling and query answer simulation process thirty times, independently with various random seeds, and train all of the the various OE algorithms. This is to obtain more robust results and avoid the effects of bad initializations or pathological training sets. The metrics we measure are as follows.

\begin{itemize}
    \item \textbf{Test Triplet Accuracy}: The accuracy of the test triplets on the test set (fixed at 3000 queries not in the train set and chosen at random). This is the main metric we use to measure performance.
    \item \textbf{Measured Rank}: The rank of the embedding matrix. This is a measure of how well the algorithm is able to recover the intrinsic rank of the data. We measure this by taking the SVD of the embedding matrix and counting the number of non-zero singular values. Specifically, we use the \texttt{rank} function from the \texttt{numpy} library to compute the rank of the embedding matrix.
    \item \textbf{Peak Signal to Noise Ratio}: The PSNR is a measure of the quality of the recovered matrix. However, note that the recovered embedding matrix has to be aligned to the true percepts matrix to compute the PSNR. The specific formulation is described in Appendix~\ref{section:appendix:metrics}. (We do not report these in the paper)
    \item \textbf{Normalized Procrustes Distance}: The NPD is a measure of how well the recovered matrix matches the true matrix up to rotation, scaling and translation. To perform procrustes analysis, true percepts $\mathbf{P} \in \mathbb{R}^{N \times d}$ and the computed embedding $\mathbf{Z} \in \mathbb{R}^{N \times d'}$ must be the same shape. Therefore, we use the same subspace alignment technique to ensure that the two matrices have the same shape. The specific formulation is described in Appendix~\ref{section:appendix:metrics}. (We do not report these in the paper)
\end{itemize}

Code for this experiment is included in the supplemental material. 

\newpage
\section{Experimental Setup for Section~\ref{section:results:crowdsourced_human_experiments}}
\label{section:appendix:crowdsourced_human_experiments}

This was run on a server with 1 RTX3080 GPU and 128 GB of RAM.

For LORE, we set the regularization parameter, $\lambda$, to 0.01. For all OE methods, we set the number of dimensions of the OE, $d'$, to 15.

Note that for this experiment unlike in Appendix~\ref{section:appendix:artificial_human_experiments}, we do not have access to the true percepts $\mathbf{P}$ and therefore cannot compute the PSNR or NPD. We only report the test triplet accuracy and measured rank. For Dim-CV we use a total of 5 cross validation folds but do not use multiple initializations due to computational constraints. As it is, Dim-CV is already two orders of magnitude slower than all of the OE algorithms due to the additional burden of training multiple embeddings due to the cross validation procedure. From our observations, increasing the number of cross validation folds and using multiple initializations reduces the standard deviation of the Dim-CV test triplet accuracy and rank but not the mean. Time however, increases linearly with the number of cross validation folds and initializations.

Code for this experiment is included in the supplemental material.

\newpage
\section{Additional Crowdsourced Dataset Experiments}
\label{section:appendix:additional_crowdsourced_experiments}

The results on one more crowdsourced real life dataset, the musicians dataset \citep{ellis2002quest} are seen in Table~\ref{table:musician_dataset}. This dataset contains 448 percepts (musicians) and 118,263 triplet comparisons collected from human annotators. This is considerably undersampled in terms of triplets compared to the other datasets included in the main text. For example, Food-100 contains more triplets even though it has fewer percepts. The results are shown in the figure below. We do not run the Dim-CV due to computational constraints from the hypothesis testing procedure. We anticipate it would take \textasciitilde 7000 seconds per iteration to run Dim-CV on this dataset given that it took \textasciitilde 1700 seconds per iteration on the Food-100 dataset which has roughly one fourth the number of percepts. Therefore, we exclude it from this experiment. We set $d'=30$ as the embedding dimension for all OE methods as this has considerably more number of percepts than the other datasets.

\begin{table}[h]
    \centering
    \caption{Comparison of OEs on Musicians Dataset}
    \label{table:musician_dataset}
    \begin{tabular}{l ccc}
        \toprule
        \textbf{Method} & \multicolumn{3}{c}{\textbf{Musicians}} \\
        \midrule
        \textbf{Metric} $\pm$ Std & \textbf{Test Acc.} & \textbf{Rank} & \textbf{Time (s)} \\
        \midrule
        \textbf{LORE} (Ours) & $75.63 \pm 0.94$ & $\mathbf{27.8 \pm 0.55}$ & $13.82 \pm 9.72$ \\
        \textbf{SOE} & $81.41 \pm 0.93$ & $30 \pm 0.0$ & $28.45 \pm 2.20$ \\
        \textbf{FORTE} & $69.94 \pm 1.61$ & $30 \pm 0.0$ & $8.63 \pm 2.79$ \\
        \textbf{t-STE} & $79.49 \pm 1.52$ & $30 \pm 0.0$ & $98.97 \pm 81.26$ \\
        \textbf{CKL} & $78.05 \pm 0.96$ & $30 \pm 0.0$ & $24.3 \pm 10.51$ \\
        \bottomrule
    \end{tabular}
\end{table}

\newpage
\section{Scalability of LORE}
\label{section:appendix:lore_scalability}

\begin{figure}[h]
    \centering
    \includegraphics[width=\textwidth]{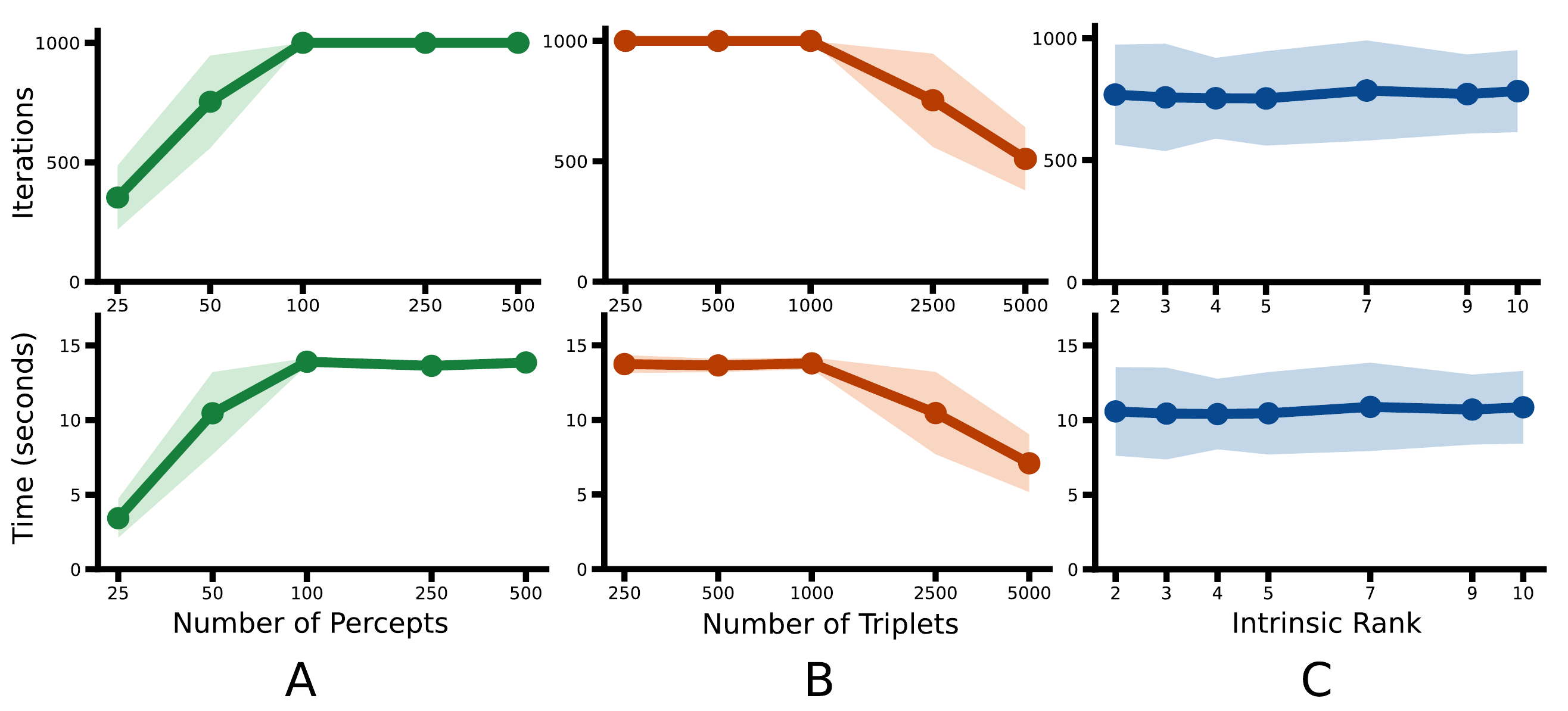}
    \caption{\textbf{LORE remains scalable as dataset parameters change}: (A) Time and Number of Iterations as Number of Percepts Varies (log scaled x axis) (B) Time and Number of Iterations as Number of Triplets Varies (log scaled x axis) (C) Time and Number of Iterations as Intrinsic Rank Varies. Error bars indicate $\pm$ two standard deviations over 30 random seeds of the generative process for different datasets. Baseline parameters are intrinsic rank = 5, number of percepts = 50, number of triplets = 2500.}
    \label{fig:lore_scalability}
\end{figure}

For our empirical implementation in this paper, we cap the number of iterations of LORE to 1000. In this experiment, we fix the noise to a moderate noise of 0.1 variance sampled from a Gaussian distribution. As seen in Figure~\ref{fig:lore_scalability}, which varies one dataset characteristic while keeping the others constant, the number of iterations correlates almost perfectly with the time taken to run LORE. Therefore, if LORE converges in fewer iterations, it will take less time to run and the length of each iteration is roughly constant for a given dataset. We see that as number of percepts increases, the number of iterations increases until it hits the 1000 iteration cap. This is likely due to the fact that as the number of percepts increases, the optimization problem becomes more ambiguous as $\mathbf{Z}$ increases in size. As the number of triplets increases, the number of iterations decreases likely because there are more constraints to guide the optimization. Finally, the intrinsic rank does not seem to have a significant effect on the number of iterations needed to converge.

\newpage
\section{Convergence of LORE}
\label{section:appendix:lore_convergence}

\begin{figure}[h]
    \centering
    \includegraphics[width=\textwidth]{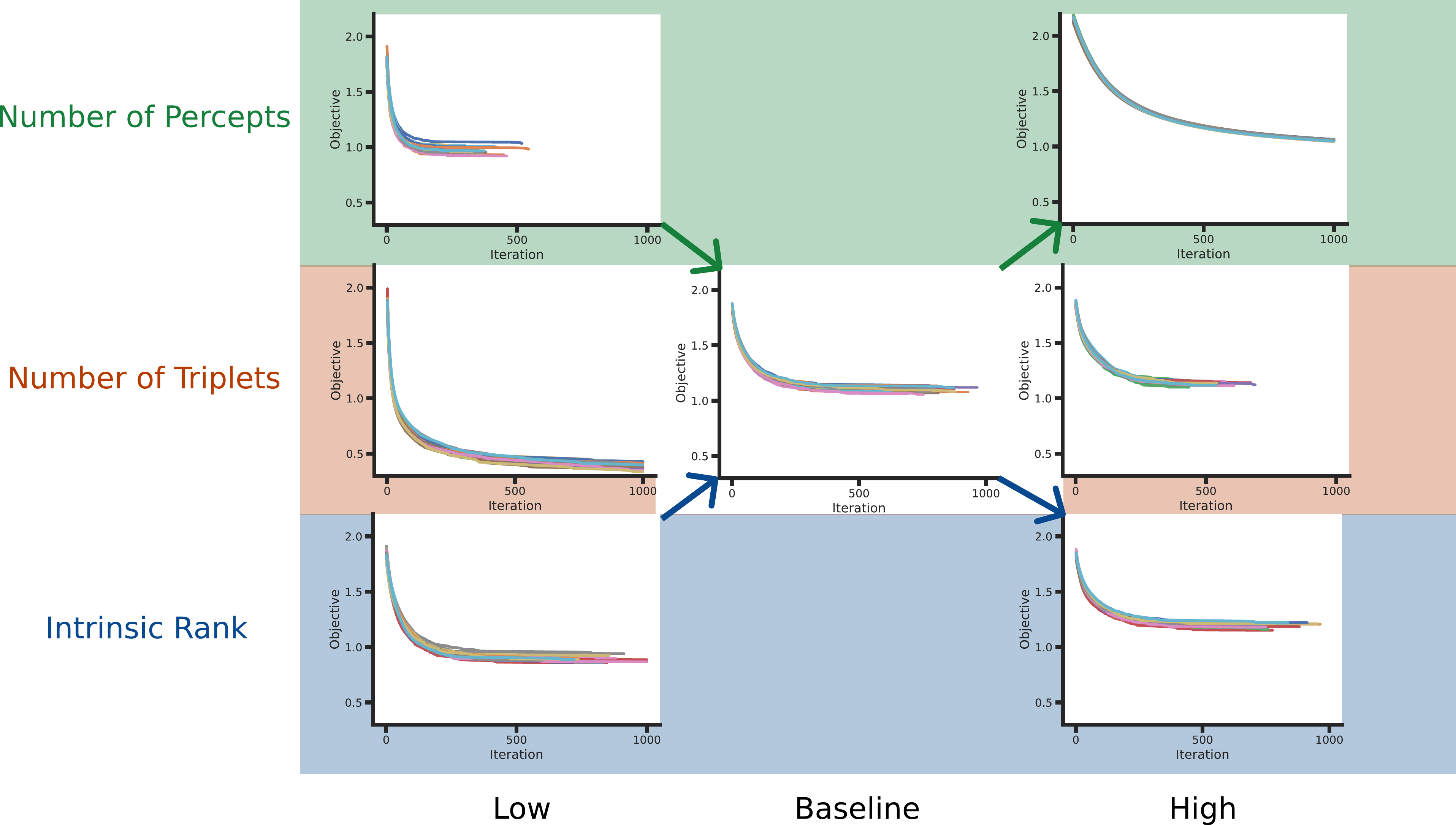}
    \caption{\textbf{LORE convergences smoothly despite different initializations and nonconvexity}: Objective function values as LORE trains for 30 random seeds for the same true perceptual space. Each color is a separate seed run. Number of Percepts, Number of Triplets and Intrinsic Rank are varied individually. Baseline parameters are a space defined by intrinsic rank = 5, number of percepts = 50, number of triplets = 2500.}
    \label{fig:lore_convergence}
\end{figure}

To visually examine the convergence of LORE we run an experiment where we fix the true perceptual space and vary the random seed of the initialization of LORE. We see in Figure~\ref{fig:lore_convergence} that despite the non-convexity of the optimization problem, LORE converges smoothly across different random initializations. We vary one dataset characteristic at a time while keeping the others constant like in the previous section. We see that as the number of percepts increases (25, 50, 500), the objective function convergence takes longer likely due to the increased ambiguity of the optimization problem as the size of $\mathbf{Z}$ increases. As the number of triplets increases (250, 2500, 5000), convergence is faster likely due to the increased constraints on the optimization problem. Finally, the intrinsic rank (2, 5, 10) does not seem to have a significant effect on convergence speed.

Taken together with Appendix~\ref{section:appendix:lore_scalability}, these results indicate that LORE smoothly converges despite the non-convexity of the optimization problem and that the number of iterations taken to converge scales reasonably with dataset parameters. This empirically shows the provable convergence to a stationary point that we show in Appendix~\ref{section:appendix:lore_proof}. This result in combination with the fact that local minima for ordinal embedding problems are often good solutions \citep{vankadara2023insights} indicates that LORE is a scalable and practical algorithm for ordinal embedding in real world settings and that the theory supports the empirical findings.

\newpage
\section{Formulation of Other Metrics}
\label{section:appendix:metrics}

Code for all of these implementations is included in the supplemental material.

\subsection{Subspace Alignment}
To perform procrustes analysis, true percepts $\mathbf{P} \in \mathbb{R}^{N \times d}$ and the computed embedding $\mathbf{Z} \in \mathbb{R}^{N \times d'}$ must be the same shape. 

Specifically we compute
\begin{equation}
    \mathbf{P_c} = \mathbf{P} - \mathbf{1}_N \mu^T_{\mathbf{P}} \quad \text{and} \quad \mathbf{Z_c} = \mathbf{Z} - \mathbf{1}_N \mu^T_{\mathbf{Z}}
\end{equation}

Then, we compute the tikhonov regularized projection matrix to prevent numerical instability due to ill conditioning. We use a regularization parameter of $\eta=1e-3$.
\begin{equation}
    \mathbf{A} = (\mathbf{Z}_c^T \mathbf{Z}_c + \eta \mathbf{I}_d)^{-1} \mathbf{Z}_c^T \mathbf{P}_c
\end{equation}

Then, we can get the aligned ordinal embedding $\mathbf{Z}_{\text{aligned}} = \mathbf{Z}_c \mathbf{A} + \mathbf{1}_N \mu^T_{\mathbf{P}}$.

\subsection{Normalized Procrustes Distance}
Now that we have an aligned matrix the same shape as $\mathbf{P}$, the normalized procrustes distance between the aligned embedding and the true percepts can be computed as
\begin{equation}
    \text{Normalized Procrustes Distance} = \frac{\|\mathbf{P} - \mathbf{Z}_{\text{aligned}}\|_F}{\|\mathbf{Z}_c\|_F}
\end{equation}

\subsection{Peak Signal to Noise Ratio}
The Peak Signal to Noise Ratio (PSNR) is a measure of the quality of the recovered matrix and is defined as $20 \log_{10} (\max(\mathbf{Z}_{\text{aligned}}) / (\|\mathbf{Z}_{\text{aligned}} - \mathbf{P}\|_F))$ where $\mathbf{P}$ is the true matrix. 

\newpage
\section{LLM Usage}
In this work, we leverage the use of large language models for two purposes. (1) to refine the writing by eliminating grammatical errors and improving flow. However, these were only used at the individual paragraph level rather than whole sections and (2) to discover similar papers during the literature review for the related work. Specifically, we searched for terms like "distance metric learning", "contrastive learning", "psychophysical scaling" etc.

\end{document}